\documentclass[final,12pt]{clear2022} 


\usepackage{amsmath,amsfonts,bm}


















\def\1{\bm{1}}



\def\rt{{\textnormal{t}}}

\def\rw{{\textnormal{w}}}
\def\rx{{\textnormal{x}}}
\def\ry{{\textnormal{y}}}
\def\rz{{\textnormal{z}}}


\def\rvc{{\mathbf{c}}}

\def\rve{{\mathbf{e}}}

\def\rvu{{\mathbf{i}}}

\def\rvt{{\mathbf{t}}}
\def\rvu{{\mathbf{u}}}
\def\rvv{{\mathbf{v}}}
\def\rvw{{\mathbf{w}}}
\def\rvx{{\mathbf{x}}}
\def\rvy{{\mathbf{y}}}
\def\rvz{{\mathbf{z}}}





\def\vtheta{{\bm{\theta}}}
\def\va{{\bm{a}}}
\def\vb{{\bm{b}}}
\def\vc{{\bm{c}}}

\def\vf{{\bm{f}}}
\def\vg{{\bm{g}}}
\def\vh{{\bm{h}}}
\def\vi{{\bm{i}}}
\def\vj{{\bm{j}}}
\def\vk{{\bm{k}}}

\def\vr{{\bm{r}}}
\def\vs{{\bm{s}}}
\def\vt{{\bm{t}}}
\def\vu{{\bm{u}}}
\def\vv{{\bm{v}}}

\def\vx{{\bm{x}}}
\def\vy{{\bm{y}}}
\def\vz{{\bm{z}}}


\def\mA{{\bm{A}}}

\def\mG{{\bm{G}}}

\def\mI{{\bm{I}}}
\def\mJ{{\bm{J}}}

\def\mL{{\bm{L}}}

\def\mO{{\bm{O}}}

\def\mR{{\bm{R}}}

\def\mX{{\bm{X}}}

\DeclareMathAlphabet{\mathsfit}{\encodingdefault}{\sfdefault}{m}{sl}
\SetMathAlphabet{\mathsfit}{bold}{\encodingdefault}{\sfdefault}{bx}{n}











\newcommand{\E}{\mathbb{E}}

\newcommand{\R}{\mathbb{R}}

\newcommand{\KL}{D_{\mathrm{KL}}}
\newcommand{\Var}{\mathrm{Var}}





\usepackage{bm}
\usepackage{mathtools}
\usepackage{paralist}
\usepackage{xcolor}

\SetCommentSty{mycommfont}
\usepackage{booktabs}
\usepackage{caption}
\captionsetup{font=footnotesize}
\usepackage{wrapfig}
\usepackage{centernot}
\usepackage{enumitem}   

\newcommand{\independent}{\raisebox{0.05em}{\rotatebox[origin=c]{90}{$\models$}}}

\DeclareMathOperator{\diag}{diag}

\DeclareMathOperator{\bern}{Bern}
\DeclareMathOperator{\logi}{Logi}
\newcommand{\blambda}{\bm\lambda}

\newcommand{\btheta}{\bm\theta}
\newcommand{\bphi}{\bm\phi}
\newcommand{\bgamma}{\bm\gamma}
\newcommand{\bbeta}{\bm\beta}
\newcommand{\beps}{\bm\epsilon}

\newcommand{\inv}{^{-1}}
\newcommand{\st}{^*}
\newcommand{\OS}{\mathbb{O}}
\newcommand{\PS}{\mathbb{P}}

\newcommand{\CF}{\mathbb{C}}

\newcommand{\data}{\mathcal{D}}
\newcommand{\x}{\rvx}
\newcommand{\y}{\rvy}
\newcommand{\z}{\rvz}
\newcommand{\rconf}{\rvu}
\newcommand{\conf}{\vu}
\newcommand{\condpriorparam}{p_{\bm\lambda}(\vz|\vx,t)}
\newcommand{\decoderparam}{p_{\vf}(\vy|\vz,t)}
\newcommand{\encoderparam}{q_{\bphi}(\vz|\vx,\vy,t)}

\newcommand{\encoder}{q(\vz|\vx,\vy,t)}

\newcommand{\vaepost}{p(\vz|\vx,\vy,t)}
\newcommand{\trueobs}{p(\vy|\vx,t)}

\newcommand{\binset}{\{0,1\}}

\setlist{nosep} 
\AtBeginDocument{%
\setlength{\belowdisplayskip}{4pt plus 1.0pt minus 2.0pt} \setlength{\belowdisplayshortskip}{2pt plus 1.0pt minus 2.0pt}
\setlength{\abovedisplayskip}{5pt plus 1.0pt minus 2.0pt} \setlength{\abovedisplayshortskip}{3pt plus 1.0pt minus 2.0pt}
}
\makeatletter
\def\thm@space@setup{%
  \thm@preskip=3.0pt plus 1.0pt minus 2.0pt
  \thm@postskip=2.0pt plus 1.0pt minus 2.0pt
}
\makeatother

\title[Towards Principled Causal Effect Estimation by Deep Identifiable Models]{Towards Principled Causal Effect Estimation \\ by Deep Identifiable Models}
\usepackage{times}



\clearauthor{%
 \Name{Pengzhou Wu} \Email{wu.pengzhou@ism.ac.jp}\\
 \addr Department of Statistical Science, The Graduate University for Advanced Studies
 \AND
 \Name{Kenji Fukumizu} \Email{fukumizu@ism.ac.jp}\\
 \addr The Institute of Statistical Mathematics
}

\begin{document}

\maketitle

\begin{abstract}
As an important problem in causal inference, we discuss the estimation of treatment effects (TEs). Representing the confounder as a latent variable, we propose Intact-VAE, a new variant of variational autoencoder (VAE), motivated by the prognostic score that is sufficient for identifying TEs. Our VAE also naturally gives representations balanced for treatment groups, using its prior. Experiments on (semi-)synthetic datasets show state-of-the-art performance under diverse settings, including unobserved confounding. Based on the identifiability of our model, we prove identification of TEs under unconfoundedness, and also discuss (possible) extensions to harder settings.
\end{abstract}

\section{Introduction}


Causal inference \citep{imbens2015causal, pearl2009causality}, i.e, estimating causal effects of interventions, is a fundamental task across many domains. We address the estimation of treatment effects (TEs), such as effects of public policies or new drugs, based on a set of observations consisting of binary labels for treatment/control (non-treated), outcome, and other covariates. The fundamental difficulty of causal inference is that we never observe \textit{counterfactual} outcomes, which would have been if we had made another decision (treatment or control). While the ideal protocol for causal inference is randomized controlled trials (RCTs), they often have ethical and practical issues, or suffer from expensive costs. Thus, causal inference from observational data is important. 
It introduces other challenges, however. The most crucial one is \textit{confounding}: there may be variables (called confounders) that causally affect both the treatment and the outcome, and spurious correlation follows. 

A majority of works in causal inference rely on the \textit{unconfoundedness}, which means that appropriate covariates are collected so that the confounding can be controlled by conditioning on those variables. 
This is still challenging, due to systematic \textit{imbalance} (difference) of the distributions of the covariates between the treatment and control groups. Classical ways to deal with imbalance are matching and re-weighting \citep{Stuart2010matching,rosenbaum2020modern}. There are semi-parametric methods \citep[e.g. TMLE,][]{van2018targeted}, which have better finite sample performance, and also non-parametric methods \citep[e.g.~CF,][]{wager2018estimation}. Notably, there is a recent rise of interest in learning balanced representation of covariates, which is independent of treatment groups, starting from \cite{johansson2016learning}. 

There are a few lines of works that address the difficult but important problem of \textit{unobserved confounding}.
Without covariates to adjust for, the naive regression with observed variables introduces bias, if the decision of treatment and the outcome are confounded, as explained in Sec.~\ref{sec:setup}. 
Instead, many methods assume special structures among the variables, such as instrumental variables (IVs) \citep{angrist1996identification}, proxy variables \citep{tchetgen2020introduction}, network structure \citep{veitch2019using}, and multiple causes \citep{wang2019blessings}. Among them, instrumental and proxy variables are most commonly exploited. \textit{Instrumental variables} are not affected by unobserved confounders, influencing the outcome only through the treatment. On the other hand, \textit{proxy variables} are causally connected to unobserved confounders, but are not confounding the treatment and outcome by themselves. Other methods use restrictive parametric models \citep{allman2009identifiability}, or only give interval estimation \citep{manski2009identification, kallus2019interval}.


In this work, we challenge the problem of estimating TEs under unobserved confounding. We in particular discuss the \textit{individual-level} TE, which measures the TE conditioned on the covariate, for example, on a patient's personal data. 
We highlight the natural VAE architecture following from modeling sufficient scores and the promising experimental results, under unconfounded, IV, proxy, and networked confounding settings. On the theoretical side, we show identification of TEs using our generative model under unconfoundedness, but also discuss a parallel work \citep{wu2021beta} addressing limited overlap and future work(s) under unobserved confounding.

Our method exploits the important concepts of sufficient scores for TE estimation \citep{hansen2008prognostic,rosenbaum1983central} and also the recent advance of VAE with \textit{identifiable} latent variable, which is determined by the true observational distribution \citep[iVAE]{khemakhem2020variational}.
VAEs \citep{kingma2019introduction} are suitable for causal estimation thanks to its probabilistic nature. However, most VAE methods for TEs, e.g., \cite{louizos2017causal,zhang2020treatment}, are ad hoc and thus not identifiable. 
Instead, our goal is to build a VAE that can identify and recover from observational data a sufficient score via the latent variable, which can be seen as a \textit{causal representation} \citep{scholkopf2021toward};
recovering the true confounder is not necessary. 
The code is uploaded to OpenReview, and the proofs are in Appendix~\ref{sec:proofs}. Our main contributions are: 
\begin{enumerate}[topsep=0pt, partopsep=0pt, itemsep=0pt, parsep=0pt, leftmargin=13pt]
\item[1)] A new identifiable VAE, Intact-VAE, as a balanced estimator for individualized TEs;
\item[2)] 
Experimental comparison to state-of-the-art methods under diverse settings;
\item[3)] Proof of TE identification via recovery of sufficient scores, under unconfoundedness;
\item[4)] Discussions of further theoretical developments and principled TE estimation using VAEs.
\end{enumerate}
An early version of this work, which proposed the same VAE architecture, is in \cite{wu2021identifying}.


\subsection{Related work}

As detailed in Sec.~\ref{sec:te_vae}, current VAE methods for TE estimation are more heuristic than ``causal''. Our work endeavors to remedy this situation. Below, we focus on other related works.

\textbf{Identifiability of representation learning.} The hallmark of deep neural networks (NNs) is that they can learn representations of data. 
A principled approach to interpretable representations is identifiability, that is, when optimizing our learning objective w.r.t.~the representation function, only a unique optimum, which represents the true latent structure, will be returned. With recent advances in nonlinear ICA, identifiability of representations is proved under a number of settings, e.g., auxiliary task for representation learning \citep{hyvarinen2016unsupervised, hyvarinen2019nonlinear} and VAE \citep{khemakhem2020variational}. The results are exploited in causal discovery \citep{pmlr-v108-wu20b} and causal representation learning \citep{shen2020disentangled}. To the best of our knowledge, this work is the \textit{first} to explore this 
identifiability in TE estimation.  

\textbf{Causal inference with auxiliary structures.}
CEVAE \citep{louizos2017causal} relies on the strong assumption that the true confounder distribution can be recovered from proxies. Our method is quite different in motivation, applicability, architecture. Detailed comparisons are given in Appendix \ref{sec:cevae}. Also with proxies, \cite{kallus2018causal} use matrix factorization to infer the confounders, and
\cite{mastouri2021proximal} use kernel methods to solve the underlying Fredholm integral equation.
IVs are also exploited in machine learning, there are methods using deep NNs \citep{hartford2017deep} and kernels \citep{singh2019kernel,muandet2019dual}. 


\textbf{Representation learning for causal inference.} Recently, researchers start to design representation learning methods for causal inference, but mostly limited to unconfounded settings. Some methods focus on learning a balanced representation of covariates, e.g., BLR/BNN \citep{johansson2016learning}, and TARnet/CFR \citep{shalit2017estimating}. 
\cite{shi2019adapting} use similar architecture to TARnet, considering the importance of treatment probability. There are also methods using GAN \citep[GANITE]{yoon2018ganite} and Gaussian process \citep{alaa2017bayesian}. Our method shares the idea of balanced representation learning.

\begingroup

\section{Preliminaries}
\label{sec:setup}

\subsection{Treatment effects and identification}
Following \cite{imbens2015causal}, we begin by defining  \textit{potential outcomes} (or \textit{counterfactual outcomes}) $\rvy(t), t \in \{0,1\}$, which are the outcomes that would have been observed, if we applied treatment value $\rt=t$. Note that, for a unit under research, we can observe only one of $\y(0)$ or $\y(1)$, corresponding to the factual treatment applied. This is the \textit{fundamental problem of causal inference}.  We also observe relevant covariate $\rvx$, which is associated with individuals, and the observation $\data\coloneqq(\rvx,\rvy,\rt)$ is a random variable with underlying probability distribution. 

The expected potential outcome is $\mu_t(\vx) = \E(\rvy(t)|\rvx=\vx)$, conditioned on $\rvx=\vx$.
The estimands in this work are Average TE (ATE) $\nu$ and Conditional ATE (CATE) $\tau$, defined by 
\vspace{-.05in}
\begin{equation}
\label{eq:ce}
    \tau(\vx) = \mu_1(\vx) - \mu_0(\vx),\quad \nu = \E(\tau(\vx)).
\end{equation}
CATE can be seen as an \textit{individual-level} TE, if conditioned on
highly discriminative covariates.

\setlength{\columnsep}{8pt}
\setlength{\intextsep}{5pt}
\begin{wrapfigure}{r}{0.18\textwidth}
  \vspace{-.2in}
  \begin{center}
    \includegraphics[width=0.18\textwidth]{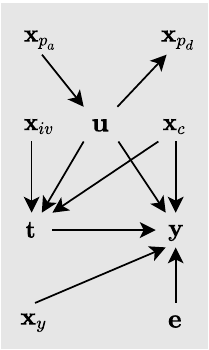}
  \end{center}
  \vspace{-.2in}

  \caption{
  \footnotesize{A possible causal graph of unobserved confounding.}
  } 
  
\label{fig:setting}
\end{wrapfigure}

\textit{Identification of TEs} means that, the true observational distribution uniquely determines and gives the ATE or, better, CATE. 
Adapting standard identification results \citep{rubin2005causal}\citep[Ch.~3]{hernanCausalInferenceWhat2020}, we start with the following conditions, denoted by {\bf (A)}: 
there exists a (possibly unobserved) variable $\rconf \in \R^n$ such that together with $\rvx$, it gives (i) (Exchangeability)
$\rvy(t) \independent \rt| \rconf, \rvx$ and (ii) (Overlap, or Positivity) $p(\rt|\rconf, \rvx) > 0$; and (iii) (Consistency of counterfactuals) $\rvy = \rvy(t)$ if $\rt=t$. All of them are satisfied for \textit{both} $t$, which is our convention when $t$ appears in a statement without quantification. 
Intuitively, exchangeability means all confounders are in essence contained in $(\rconf,\x)$, and overlap means each possible value of $(\rconf,\x)$ is observed for both treatment groups.
Note that, joint exchangeability $\rvy(0),\rvy(1) \independent \rt| \rvx,\rconf$ is stronger than exchangeability and is not necessary for identification \citep[pp.~15]{hernanCausalInferenceWhat2020}.

A general example of causal structure that satisfies the three conditions is shown in Figure \ref{fig:setting}, although further structural constraints might be necessary for theoretical developments (see Sec.~\ref{sec:conf}).. 
Here, $\rvx_{c}$,$\rvx_{iv}$,$\rvx_{p_a}$,$\rvx_{p_d}$,$\rvx_{y}$ are covariates that are: (observed) confounder, IV, antecedent proxy (that is antecedent of $\rvz$), descendant proxy, and antecedent of $\rvy$, respectively. The covariate(s) $\rvx$ may \textit{not} have subsets in any categories in the graph. $\rve$ is unobserved exogenous noise on $\rvy$. Assumption {\bf (A)} may hold otherwise, e.g., $\rvx$ is a child of $\rt$. 

CATE can be given by \eqref{eq:id}, using assumption \textbf{(A)} in the second equality.
\begin{equation}
\label{eq:id}
    \mu_t(\vx) = \E(\E(\rvy(t)|\rconf,\vx)) = \E(\E(\rvy|\rconf,\vx,\rt=t)) =\textstyle \int (\int p(y|\conf,\vx,t)ydy)p(\conf|\vx)d\conf.
\end{equation}

If the variable $\rconf$ is observed, then \eqref{eq:id} identifies CATE. However, if $\rconf$ is an \textit{unobserved confounder},
the naive regression $\E[\rvy| \rvx=\vx, \rvt=t]$ based on observable variables is not equal to $\mu_t(\vx)$. In fact, if an unknown factor correlates with $\rvt$ positively and tends to give higher value for $\rvy$, the naive regression $\E[\rvy| \rvx=\vx, \rvt=1]$ should be higher than $\E[\rvy(1)|\rvx=\vx]$.

\subsection{Prognostic  score}

Our method models prognostic scores \citep{hansen2008prognostic}, adapted as \textit{Pt-scores} in this paper,
closely related to the important concept of balancing score \citep{rosenbaum1983central}. Both are sufficient scores for identification; prognostic scores are sufficient statistics of outcome predictors and balancing score is for the treatment (see Appendix \ref{sec:scores} for details).

\begin{definition}
\label{def:scores}
A \textit{Pt-score} (PtS) is two functions $\PS_t(\rconf,\x)$ ($t=0,1$) such that $\rvy(t)\independent\rconf,\x|\PS_t(\rconf,\x)$.
A PtS is called a \textit{P-score} (PS) if $\PS_0=\PS_1$.
\end{definition}
The identity function is a trivial PS. If the true data generating process (DGP) satisfies additive noise model, i.e., $\y=\vf\st(\rconf,\x,\rt)+\rve$, then $\vf\st_t$ is a PtS \citep{hansen2008prognostic}; and it is a causal representation \citep{scholkopf2021toward} of the direct cause on $\y$, summarizing the effects of $(\rconf,\x)$.
The independence property of PtS (Lemma \ref{prop:pscore} in Appendix),
\begin{equation}
\label{eq:ps_indp}
    \rvy(t)\independent\rt,\rconf,\rvx|\PS_t(\rconf,\rvx),
\end{equation}
is used in second equality of \eqref{eq:cate_by_bts} in Theorem \ref{cate_by_bts} which extends Proposition 5 in \cite{hansen2008prognostic}. 

\begin{theorem}[CATE by PtS]
\label{cate_by_bts}
If $\PS_t$ is a PtS,
then CATE can be given by
\begin{equation}
\label{eq:cate_by_bts}
    \begin{split}
        \mu_{t}(\vx) &= \E(\E(\rvy({t})|\PS_{t}(\rconf,\vx),\vx)) = \E(\E(\rvy|\PS_{t}(\rconf,\vx),{t})) \\ &=\textstyle \int (\int p_{\y|\PS_{ t},\rt}(y|P,{t})ydy)p_{\PS_t|\x}(P|\vx)dP, 
    \end{split}
\end{equation}
where $\y|\PS_{t}(\rconf,\x),\rt \sim p_{\y|\PS_{t},\rt}(\vy|P,t)$ and $\PS_{t}(\rconf,\x)|\x\sim p_{\PS_t|\x}(P|\vx)$.
\end{theorem}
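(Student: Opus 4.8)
The plan is to derive the three expressions in \eqref{eq:cate_by_bts} from left to right, using only the defining property of a PtS together with assumption \textbf{(A)}. The starting point is the tower property of conditional expectation applied to the definition $\mu_t(\vx) = \E(\rvy(t)\mid\rvx=\vx)$: conditioning on the finer $\sigma$-algebra generated by $\PS_t(\rconf,\vx)$ (together with $\vx$) and taking the outer expectation over $\PS_t(\rconf,\vx)\mid\vx$ immediately gives $\mu_t(\vx) = \E(\E(\rvy(t)\mid\PS_t(\rconf,\vx),\vx)\mid\vx)$, which is the first equality. This step is purely formal and carries no causal content.

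The second equality is where the work lies, and I expect it to be the main obstacle. It requires showing that $\E(\rvy(t)\mid\PS_t(\rconf,\vx),\vx) = \E(\rvy\mid\PS_t(\rconf,\vx),\rt=t)$. I would establish this in two moves. First, drop the conditioning on $\vx$: since $\rvy(t)\independent\rconf,\vx\mid\PS_t(\rconf,\vx)$ by Definition \ref{def:scores}, the conditional distribution of $\rvy(t)$ given $(\PS_t(\rconf,\vx),\vx)$ does not actually depend on $\vx$, so $\E(\rvy(t)\mid\PS_t(\rconf,\vx),\vx) = \E(\rvy(t)\mid\PS_t(\rconf,\vx))$. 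Second, introduce the treatment indicator and replace the potential outcome by the factual one: by the extended independence \eqref{eq:ps_indp}, namely $\rvy(t)\independent\rt,\rconf,\rvx\mid\PS_t(\rconf,\rvx)$, we may add $\rt=t$ to the conditioning set without changing the expectation, giving $\E(\rvy(t)\mid\PS_t(\rconf,\vx)) = \E(\rvy(t)\mid\PS_t(\rconf,\vx),\rt=t)$; then consistency (iii) in \textbf{(A)}, $\rvy=\rvy(t)$ on the event $\{\rt=t\}$, lets us swap $\rvy(t)$ for $\rvy$ inside that last conditional expectation. Combining the two moves yields the integrand of the second expression, and taking the outer expectation over $\PS_t(\rconf,\vx)\mid\vx$ completes it. Here one should note that \eqref{eq:ps_indp} itself is not automatic from Definition \ref{def:scores} alone but follows from it combined with exchangeability (i) and overlap (ii); the excerpt attributes this to Lemma \ref{prop:pscore}, which I would invoke directly.

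The third equality is just a change of notation: writing $P$ for a value of the score, $\int p_{\y\mid\PS_t,\rt}(y\mid P,t)\,y\,dy$ is exactly $\E(\rvy\mid\PS_t(\rconf,\vx)=P,\rt=t)$, and $p_{\PS_t\mid\x}(P\mid\vx)$ is the density of $\PS_t(\rconf,\vx)\mid\vx$, so the displayed double integral is a rewriting of $\E(\E(\rvy\mid\PS_t(\rconf,\vx),\rt=t)\mid\vx)$. The only subtlety worth a remark is regularity: overlap (ii) guarantees $p(\rt=t\mid\rconf,\vx)>0$, hence the conditioning event $\{\rt=t\}$ has positive probability given the score, so $\E(\rvy\mid\PS_t(\rconf,\vx),\rt=t)$ is well defined; and one should assume the relevant conditional densities exist, as the notation in the statement already presupposes. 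I would close by noting that the argument specializes verbatim to a P-score by setting $\PS_0=\PS_1$, recovering the balancing-score version of \cite{hansen2008prognostic}'s Proposition 5.
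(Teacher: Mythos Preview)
Your proposal is correct and follows exactly the route the paper indicates: the paper omits the proof, remarking only that it is ``rather straightforward from \eqref{eq:ps_indp} (see Lemma \ref{prop:pscore}) and the definition of PtS,'' and explicitly notes that \eqref{eq:ps_indp} is used in the second equality. Your three-step argument (tower property for the first equality; PtS definition plus \eqref{eq:ps_indp} plus consistency for the second; notational unpacking for the third) is precisely the intended derivation, with the added benefit that you spell out the role of overlap in ensuring the conditioning on $\{\rt=t\}$ is well defined.
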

Compared to \eqref{eq:id},
$P=\PS_t(\conf,\vx)$
plays the role of $\conf$, and $p_{\y|\PS_{t},\rt}$ conditions on $P$ instead of $(\conf,\vx)$.
In general, information from $\rconf$ is needed to determine $p_{\y|\PS_{t},\rt}$ and $p_{\PS_t|\x}$. In Sec.~\ref{sec:principled}, we first show that our generative model can identify an equivalent PS if $\rconf$ is observed (Sec.~\ref{sec:unconf}), and then discuss how our model is connected to and might learn relaxations of PtS when $\rconf$ is unobserved (Sec.~\ref{sec:conf}).




\section{Intact-VAE}
\label{sec:intact}
In this section, we first introduce our generative model and VAE architecture, then prove the identifiability of the model, and finally use our model to estimate TEs. Our method is expected to learn a latent representation sufficient for TE identification/estimation, as we will see in the next section.

\subsection{Model and architecture}
\label{sec:arch}

\begingroup

We build the generative model \eqref{model_indep} of our VAE with the direct cause $\PS_t(\rconf,\x)$ as the latent variable $\z$ and connect the model to iVAE. The outcome distribution $p_{\y|\PS_{t},\rt}$ in \eqref{eq:cate_by_bts} is modeled by $p_{\vf}(\vy|\vz,t)$, and the score distribution $p_{\PS_{\rt}|\x}$ in \eqref{eq:cate_by_bts} is modeled by $p_{\blambda}(\vz|\vx,t)$. The condition on $\x$ in the joint model $p(\rvy,\rvz|\rvx,\rt)$ reflects that our estimand is CATE given $\x$.
\begin{equation}
\label{model_indep}
    \begin{gathered}
        p_{\vtheta}(\vy,\vz|\vx,t) = p_{\vf}(\vy|\vz,t)p_{\blambda}(\vz|\vx,t), \\
        p_{\vf}(\vy|\vz,t)=p_{\beps}(\vy-\vf_{t}(\vz)),
         \quad 
        p_{\blambda}(\vz|\vx,t) \sim \mathcal{N} ( \vz ; \vh_{t}(\vx),\diag(\vk_{t}(\vx))),
    \end{gathered}
\end{equation}
The outcome assumes an additive noise model such that 
$\beps \sim p_{\beps}$ denotes the exogenous noise; and $p_{\blambda}$ is a factorized Gaussian, where $\blambda_{\rt}(\x)\coloneqq \diag\inv(\vk_{\rt}(\rvx))(\vh_{\rt}(\x), -\frac{1}{2})^T$ is the natural parameter as in the exponential family. $\vtheta\coloneqq(\vf,\blambda)=(\vf, \vh, \vk)$ contains the functional parameters. 

The model is learned by the evidence lower bound (ELBO) which estimates the variational lower bound (See Appendix \ref{vaes} for the basics of VAEs):
\begin{equation}
\label{elbo}
\begin{split}
    \log \trueobs \geq \log \trueobs - \KL(\encoder\Vert \vaepost) \\ 
    = \E_{\vz \sim q}\log \decoderparam - \KL(\encoder\Vert \condpriorparam) .
\end{split}
\end{equation}
with $p_{\vf}$ as the decoder, $q$ as the encoder, and $p_{\blambda}$ as the \textit{conditional} prior.

\setlength{\columnsep}{8pt}
\setlength{\intextsep}{5pt}
\begin{wrapfigure}{r}{0.3\textwidth}
  \vspace{-.2in}
  \begin{center}
    \includegraphics[width=0.3\textwidth]{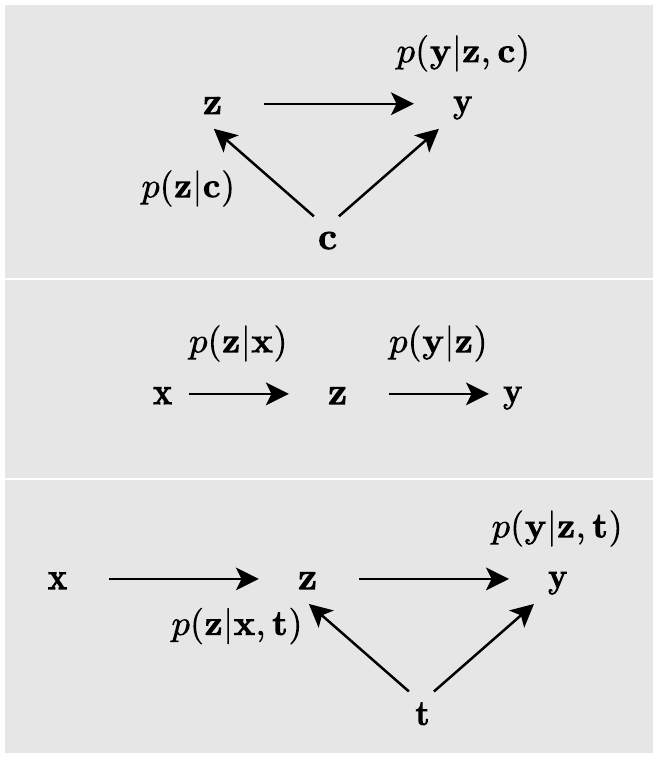}
  \end{center}
  \vspace{-.2in}
  \caption{Decoders of CVAE, iVAE, and Intact-VAE.} 
\vspace{-.1in}
\label{f:vae}
\end{wrapfigure}

We name this architecture \textit{Intact-VAE} (\textit{I}de\textit{n}tifiable \textit{t}re\textit{a}tment-\textit{c}ondi\textit{t}ional VAE). 
Note that \eqref{model_indep} has a similar factorization with the generative model of iVAE, $p_{\btheta}(\vy,\vz|\vx) = p_{\vf}(\vy|\vz)p_{\blambda}(\vz|\vx)$; the first factor (decoder) does not condition on $\x$. Similarly, our decoder $p_{\vf}(\vy|\vz,t)$ conditions on $\z$ as a PtS which satisfies $\rvy\independent \rvx|\z,\rt$.
On the other hand, the condition on $\rt$ is in both the factors of \eqref{model_indep}. Thus, our VAE architecture can be seen as a combination of iVAE and conditional VAE (CVAE) \citep{sohn2015learning,kingma2014semi}, with $\rt$ as the conditioning variable. 
See Figure \ref{f:vae} for the comparison in terms of graphical models. Our encoder $q$, which conditions on all the observables and builds the approximate posterior, is similar to other VAEs. See Appendix \ref{vaes} for summary of CVAE and iVAE.






Though our theory allows more general distributions, for tractable inference and easy implementation, we have used factorized Gaussian for the prior, and now also use it for the decoder $p_{\vf,\vg}(\vy|\vz,t)$ and encoder $\encoderparam$; i.e., they are products of univariate Gaussian distributions:
\begin{equation} 
\label{more_param}
\textstyle
    \rvy|\rvz,\rt \sim \prod_{j}^d \mathcal{N}(y_j;f_j, g_j), 
    \quad 
    \rvz|\rvx,\rvy,\rt \sim \prod_{i=1}^n \mathcal{N}(z_i ; r_i , s_i).
\end{equation}
$\bm\phi\coloneqq(\vr,\vs)$, like $\btheta$, contains functional parameters given by NNs which take $(\x,\y,\rt)$ as inputs.
We often write $t$ of the function argument in subscripts.

\subsection{Identifiability of model parameters}

Our model identifiability lays the foundation for principled causal effect estimation, as we will discuss in Sec.~\ref{sec:principled}. The main theoretical result of iVAE is extended in our Theorem \ref{idmodel} which combines the techniques from \cite{khemakhem2020variational} and \cite{sorrenson2019disentanglement}. Essentially the same results can be proved for other exponential family priors. 

\endgroup

\begin{theorem}[Model identifiability]
\label{idmodel}
Given $p_{\vtheta}(\vy,\vz|\vx,t)$ specified by \eqref{model_indep}
, for $\rt=t$, assume  

\renewcommand{\labelenumi}{\roman{enumi})}
\begin{enumerate}
\def\theenumi{\roman{enumi})}

\item $\vf_t(\rvz)$ 
is injective and differentiable;


\item \label{ass:inv_jac} there exist $2n+1$ points $\vx_0,...,\vx_{2n}$ such that the $2n$-square matrix $\mL_t \coloneqq [\bgamma_{t,1},...,\bgamma_{t,2n}]$ is invertible, where $\bgamma_{t,k}\coloneqq\blambda_t(\vx_k)-\blambda_t(\vx_0)$.
\end{enumerate}

Then, given $\rt=t$, the family is \emph{identifiable} up to an equivalence class. That is, if $p_{\vtheta}(\rvy|\rvx,\rt=t)=p_{\vtheta'}(\rvy|\rvx,\rt=t)$\footnote{$\vtheta'=(\vf',\vh',\vk')$ is another parameter giving the same distribution. In this paper, symbol $'$ (prime) always indicates another parameter (variable, etc.) in the equivalence class.
}, we have the relation between parameters: for any $\vy_t$ in the image of $\vf_t$,
\begin{equation}
\label{eq:class}
     \vf_t^{-1}(\vy_t) = \diag(\va){\vf'_t}\inv(\vy_t) + \vb 
     =:\mathcal{A}_t({\vf'_t}\inv(\vy_t))
\end{equation}
where $\diag(\va)$ is an invertible $n$-diagonal matrix and $\vb$ is a $n$-vector, both depend on $\bm\lambda_t$ and $\bm\lambda'_t$. 
\end{theorem}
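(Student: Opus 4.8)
The plan is to replay the iVAE model-identifiability argument of \cite{khemakhem2020variational} in our conditional setting (fix $\rt=t$ once and for all; the covariate $\rvx$ plays the role of the auxiliary variable) and then append the Gaussian-specific step of \cite{sorrenson2019disentanglement} that sharpens the generic ``identifiability up to a linear transformation of the sufficient statistics'' into the coordinatewise-affine relation \eqref{eq:class}. Throughout I condition on $\rt=t$, write $\mathcal Y_t\coloneqq\vf_t(\R^n)$ for the image of $\vf_t$, and let $\bar\vy\in\mathcal Y_t$ denote a generic noiseless output $\vf_t(\vz)$, so that $\vf_t^{-1}(\bar\vy)=\vz$.

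First I would \emph{deconvolve the exogenous noise}. Since $p_{\vf}(\vy|\vz,t)=p_{\beps}(\vy-\vf_t(\vz))$ with $\vf_t$ injective, $p_{\vtheta}(\vy|\vx,t)$ is the convolution of $p_{\beps}$ with the push-forward $\mu_{t,\vx}\coloneqq(\vf_t)_{\#}\,p_{\blambda}(\cdot|\vx,t)$. Taking Fourier transforms in $\vy$, the hypothesis $p_{\vtheta}(\vy|\vx,t)=p_{\vtheta'}(\vy|\vx,t)$ becomes $\widehat{p_{\beps}}\cdot\widehat{\mu_{t,\vx}}=\widehat{p_{\beps}}\cdot\widehat{\mu'_{t,\vx}}$; under the standard (inherited) mild condition that $\widehat{p_{\beps}}$ is nonzero almost everywhere --- automatic for the Gaussian noise of \eqref{more_param} --- we cancel it and obtain $\mu_{t,\vx}=\mu'_{t,\vx}$ for every $\vx$, hence in particular $\mathcal Y_t=\mathcal Y'_t$.

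Next I would \emph{change variables and difference out $\vx$}. On the common image, the equality $\mu_{t,\vx}=\mu'_{t,\vx}$, pulled back through $\vf_t^{-1}$ (resp.\ ${\vf'_t}^{-1}$) by the change-of-variables formula and with the factorized Gaussian written in exponential-family form, becomes
\begin{equation*}
\langle\bm T(\vf_t^{-1}(\bar\vy)),\blambda_t(\vx)\rangle-\log Z_t(\vx)+\log|\det J_{\vf_t^{-1}}(\bar\vy)|=\langle\bm T({\vf'_t}^{-1}(\bar\vy)),\blambda'_t(\vx)\rangle-\log Z'_t(\vx)+\log|\det J_{{\vf'_t}^{-1}}(\bar\vy)|,
\end{equation*}
where $\bm T(\vz)=(z_1,\dots,z_n,z_1^2,\dots,z_n^2)^\top$ is the sufficient statistic and $\blambda_t(\vx)$ the natural parameter. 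Evaluating at $\vx=\vx_k$ and at $\vx=\vx_0$ and subtracting cancels the $\vx$-free Jacobian terms, leaving, for $k=1,\dots,2n$, the identity $\langle\bm T(\vf_t^{-1}(\bar\vy)),\bgamma_{t,k}\rangle=\langle\bm T({\vf'_t}^{-1}(\bar\vy)),\bgamma'_{t,k}\rangle+c_k$ with $c_k$ absorbing the log-partition differences. Stacking over $k$ and inverting $\mL_t=[\bgamma_{t,1},\dots,\bgamma_{t,2n}]$, invertible by assumption (ii), gives
\begin{equation*}
\bm T(\vf_t^{-1}(\bar\vy))=\mM\,\bm T({\vf'_t}^{-1}(\bar\vy))+\vc\qquad\text{for all }\bar\vy\in\mathcal Y_t,
\end{equation*}
with $\mM=\mL_t^{-\top}{\mL'_t}^{\top}$ an invertible $2n\times2n$ matrix and $\vc$ a vector, both determined by $\blambda_t,\blambda'_t$ alone.

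Finally I would \emph{exploit the Gaussian structure} to pin down $\mM$. Set $\vw=\vf_t^{-1}(\bar\vy)$, $\vv={\vf'_t}^{-1}(\bar\vy)$, so $\vv$ ranges over an open set and $\vw$ is its diffeomorphic image; splitting $\mM$ into $n\times n$ blocks, the relation reads $\vw=\mM_{11}\vv+\mM_{12}(\vv\odot\vv)+\vc_1$ together with $\vw\odot\vw=\mM_{21}\vv+\mM_{22}(\vv\odot\vv)+\vc_2$. Substituting the first into the second and matching polynomial coefficients in $\vv$ over the open set: the degree-four terms force $(\mM_{12})_{ij}^2=0$, so $\mM_{12}=\vzero$ and $\vw=\mM_{11}\vv+\vc_1$ is affine; then vanishing of the off-diagonal quadratic terms forces every row of $\mM_{11}$ to have exactly one nonzero entry, i.e.\ $\mM_{11}=\mP\diag(\va)$ for a permutation $\mP$ and an entrywise-nonzero $\va$ ($\va$ nonzero because $\mM$, hence $\mM_{11}$, is invertible). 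Absorbing the latent-coordinate relabeling (suppressed in the statement), this is exactly $\vf_t^{-1}(\vy_t)=\diag(\va)\,{\vf'_t}^{-1}(\vy_t)+\vb=\mathcal A_t({\vf'_t}^{-1}(\vy_t))$ on the common image of $\vf_t$, with $\va,\vb$ functions of $\blambda_t,\blambda'_t$ only --- the claim \eqref{eq:class}. I expect the main obstacle to lie in the domain bookkeeping of the first two steps: making the push-forwards genuine densities (or treating them as singular measures when $d>n$), licensing the Fourier cancellation, and verifying $\mathcal Y_t=\mathcal Y'_t$ so that the pointwise identity above has content; the final polynomial matching is mechanical but is precisely where the Gaussian prior (versus a general exponential family, where one would stop at the $\mM$-equivalence) is needed.
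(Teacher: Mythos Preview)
Your proposal is correct and follows essentially the same route as the paper's own proof: deconvolve the additive noise to equate the push-forward (noiseless) distributions, difference over the $2n{+}1$ auxiliary points to obtain the linear relation $\bm T(\vf_t^{-1})=\mM\,\bm T({\vf'_t}^{-1})+\vc$ with $\mM=\mL_t^{-\top}{\mL'_t}^{\top}$, and then invoke the Gaussian-specific argument of \cite{sorrenson2019disentanglement} to reduce $\mM$ to the required block form. The paper merely cites \cite{sorrenson2019disentanglement} for the last step and records the resulting block structure, whereas you spell out the polynomial matching and correctly flag the residual permutation ambiguity that the theorem statement suppresses.
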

The essence of Theorem \ref{idmodel} is $\vf'_t = \vf_t \circ \mathcal{A}_t$, that is, $\vf_t$ can be identified (learned) up to an affine transformation defined by $\blambda_t$. This identifiability of parameters does not directly imply TE identification; other assumptions are needed, as we show in Sec.~\ref{sec:principled}.
The assumptions in Theorem \ref{idmodel} are inherited from iVAE. 
Injectivity in i) is related to TE identification (Theorem \ref{th:id_os}), which requires injectivity in the true DGP and identifies a PtS up to an injective mapping. Intuitively, if ii) does not hold, then the support of $\blambda(\x)$ degenerates to a $(2n-1)$-dimensional space; thus, ii) holds easily in practice (see B.2.3 in \cite{khemakhem2020variational}). Note that, to have \eqref{eq:class}, we only need the same observational distribution $p(\rvy|\rvx,\rt=t)$, but this leaves room for different latent distributions.

\subsection{Causal effect estimation}
Once the VAE is learned by the ELBO \eqref{elbo}, 
the estimate of $\mu_t$ is given by 
\begin{equation}
\label{eq:cate_est}
    \hat{\mu}_{\hat{t}}(\vx)=\E_{q(\vz|\vx)}\vf_{\hat{t}}(\vz)=\E_{\data|\vx \sim p(\vy,t|\vx)}\E_{\vz\sim q_{\bphi}}\vf_{\hat{t}}(\vz), {\hat{t}} \in \binset,
\end{equation}
where $q(\vz|\vx)\coloneqq \E_{p(\vy,t|\vx)}q_{\bphi}(\vz|\vx,\vy,t)$
is the aggregated posterior.
Our overall \textbf{algorithm} steps should be clear. After training Intact-VAE by \eqref{elbo}, we feed $(\vx,\vy,t)$
into the encoder and draw posterior sample from it. 
Then, setting $\rt=\hat{t}$ in the decoder, feed the posterior sample into it, 
we get counterfactual sample as outputs of the decoder.
Finally, we infer CATE $\hat{\tau}(\vx)=\hat{\mu}_1(\vx)-\hat{\mu}_0(\vx)$.

Our method also addresses the problem of imbalance--if $p(\x|0),p(\x|1)$ are very different for some $\vx$, then we have few data points for one of $\mathcal{D}|\vx,t\coloneqq\{(\vx,\y,t)\}$, resulting in poor estimation. 
We set $\blambda_0=\blambda_1$ in the prior, 
and thus $p_{\bm\lambda}(\rvz|\rvx,\rt)=p_{\bm\lambda}(\rvz|\rvx)$ is independent of $\rt$ given $\rvx$. 
The same prior for the treatment groups, i.e., the balanced prior of latent representation, is similar to balanced representation learning \citep{johansson2016learning, shalit2017estimating}, where balanced representation is favored by ad hoc regularization. This is also related to the fact that, when building CVAE, unconditional prior can achieve better performance \citep{kingma2014semi}.

By taking $\y$, the posterior model (the encoder) is better than the prior.
On the other hand, sampling posterior requires \textit{post-treatment} observation $\vy$. Often, it is desirable that we can also have \textit{pre-treatment} prediction for a new subject, with only the observation of its covariate $\rvx=\vx$. To this end, in \eqref{eq:cate_est}, we use conditional prior $p(\rvz|\rvx)$ as a pre-treatment predictor for $\rvz$: draw sample from $p(\rvz|\vx)$ instead of $q$ and get rid of the outer average taken on $\data$; all the others remain the same. 

As usual, we expect the variational inference and optimization procedure to be (near) optimal; that is, the consistency of VAE. Consistent estimation using the prior is a direct corollary of TE identification and the consistent VAE. See Appendix \ref{sec:proofs} for formal statements and proofs. It is possible to prove the consistency of the posterior estimation, as shown in \cite{bonhomme2021posterior,liao2011posterior}, and we leave this for future work, see Sec.~\ref{sec:conf}.

\section{Principled causal effect estimation via VAEs: retrospect and prospect}
\label{sec:principled}
In this section, we present a critical review of existing VAE-based methods in Sec.~\ref{sec:te_vae} and then discuss theoretical developments that can lead to principled causal estimation, under different settings. 


\subsection{VAEs for TE estimation}
\label{sec:te_vae}

Most VAE methods for TEs, e.g., \cite{louizos2017causal,zhang2020treatment,vowels2020targeted,lu2020reconsidering}, add ad hoc heuristics into the VAEs, and thus break down probabilistic modeling, not to mention model identifiability. Moreover, the methods learn representations from proxy variables, leading to either impractical assumptions or conceptual inconsistency, in TE identification. 

\textbf{On identification.} First, as to TE identification, CEVAE assumes unobserved confounder can be recovered, which is rarely possible even under further structural assumptions \citep{tchetgen2020introduction}. Indeed, \cite{rissanen2021critical} recently give evidence that the method often fails. Other methods \citep{,zhang2020treatment,vowels2020targeted,lu2020reconsidering} assume unconfoundedness but still rely on proxy at least intuitively; for example, \cite{lu2020reconsidering} factorize the decoder as if in the proxy setting. However, \textit{unconfoundedness and proxy should not be put together}. The conceptual inconsistency is that, by definition, unconfoundedness means covariates \textit{fully} control confounding, while the motivation for proxy is that unconfoundedness is often \textit{not} satisfied in practice and covariates are at best proxies of confounding, which are non-confounders causally connected to confounders \citep{tchetgen2020introduction}. 
Second, without model identifiability, the empirical results of the methods lack solid ground; under settings not covered by their experiments, the methods would silently fail to learn proper representations, as we show in Sec.~\ref{sec:exp_syn}.

\textbf{On ad hoc heuristics.} Ad hoc heuristics break down probabilistic modeling and/or give ELBOs that do not optimally estimate the models. For example, in CEVAE, $q(\rt|\x)$ and $q(\y|\x,\rt)$ are added into the encoder to have pre-treatment estimation, and the ELBO has two additional likelihood terms respectively. The VAE in \cite{zhang2020treatment} is even more ad hoc; it splits the latent variable $\z$ into three components, and applies the ad hoc tricks of CEVAE to each of the component. Particularly, when constructing the encoder, they implicitly assume the three components of $\z$ are conditional independent give $\x$, which violates the intended graphical model. 

Compared to the above methods, our Intact-VAE is simpler and more principled, and often has better performance. It models a prognostic score as the latent variable and is based on the identification equation \eqref{eq:cate_by_bts}, while not compromised by ad hoc heuristics. Our ELBO is derived by standard variational lower bound \eqref{elbo}. Moreover, our pre-treatment prediction is given naturally by the prior, thanks to the correspondence between our model and \eqref{eq:cate_by_bts}. 
We show in the following subsections how our model and its identifiability inspire theoretical developments in TE identification.

\subsection{Identification under unconfoundedness}
\label{sec:unconf}

As a first step, we take up theoretical analysis of Intact-VAE under assumption \textbf{(A)} when $\rconf$ is observed (i.e., $\rconf$ is components of $\x$). Then, we have $\y(t) \independent \rt|\x$ and $p(\rt|\x)>0$ (Exchangeability and Overlap given $\x$). This is the standard unconfoundedness setting. Regarding to the PtS, $p_{\PS_{\rt}|\x}$ in \eqref{eq:cate_by_bts} degenerates to $\PS_{\rt}(\x)$.
Theorem \ref{th:id_os} is an identification under shape restriction \citep{chetverikov2018econometrics}, because injectivity in assumption i) is monotonicity if $\vj_t$ is on $\R$. 

\begin{theorem}[Identification via PS]
\label{th:id_os}
Use model \eqref{model_indep} under unconfoundedness, further assume
\renewcommand{\labelenumi}{\roman{enumi})}
\begin{enumerate}
\def\theenumi{\roman{enumi})}
\item (Injective separation) $\mu_{t}(\x)=\vj_t(\OS(\x))$
for some function $\OS$ and injective function $\vj_t$;
\item (Score matching) in the model, $n=d$, $\vf_t$ is injective, $\vh_0(\x)=\vh_1(\x)$, and $\vk(\x) = \bm0$.
\end{enumerate}
Then, if $\E_{p_{\btheta}}(\y|\x,\rt)=\E(\y|\x,\rt)$, we have
\renewcommand{\labelenumi}{\arabic{enumi})}
\begin{enumerate}
\def\theenumi{\arabic{enumi})}
\item (Recovery of score) $\z_{\blambda,t}=\vh_t(\x)=\vv(\OS(\x))$ where $\vv$ is an injective function;
\item (CATE Identification)
$\mu_{{t}}(\x) = \hat{\mu}_{{t}}(\x) \coloneqq \E_{p_{\blambda}(\z| \x,t)} \E_{p_{\vf}}(\y|\z,{t}) = \vf_t(\vh_t(\x))$.
\end{enumerate}
\end{theorem}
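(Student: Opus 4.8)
My plan is to exploit that, under the Score-matching conditions in ii), the conditional prior degenerates to a deterministic map, so the model's outcome regression is literally $\vf_t\circ\vh_t$, after which both conclusions are almost immediate. First I would unpack ii): since $\vk(\x)=\bm 0$, the Gaussian $\mathcal{N}(\z;\vh_t(\x),\diag(\vk_t(\x)))$ collapses to a point mass at $\vh_t(\x)$, and since $\vh_0=\vh_1=:\vh$ the latent is $\z_{\blambda,t}=\vh(\x)$ for both $t$. Using that the exogenous noise $\beps$ is centered (as is already implicit in the estimator \eqref{eq:cate_est}, which uses $\vf_{\hat t}$ directly), the law of total expectation over $p_{\vtheta}(\y,\z|\x,t)=p_{\vf}(\y|\z,t)p_{\blambda}(\z|\x,t)$ gives
\[
 \E_{p_{\vtheta}}(\y|\x,\rt=t)=\E_{p_{\blambda}(\z|\x,t)}\,\E_{p_{\vf}}(\y|\z,t)=\E_{p_{\vf}}(\y|\z=\vh(\x),t)=\vf_t(\vh(\x)),
\]
and the right-hand side is exactly the estimator $\hat\mu_t(\x)$ appearing in conclusion 2).

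Conclusion 2) then follows in one line. By the hypothesis $\E_{p_{\vtheta}}(\y|\x,\rt)=\E(\y|\x,\rt)$, and since the present setting is unconfounded (exchangeability $\y(t)\independent\rt\mid\x$, overlap $p(\rt\mid\x)>0$, and consistency $\y=\y(t)$ when $\rt=t$, all given $\x$), the right-hand side equals $\mu_t(\x)$; this is \eqref{eq:id} with $\rconf$ absorbed into $\x$, equivalently \eqref{eq:cate_by_bts} with the degenerate score distribution $\PS_{\rt}|\x$. Combining with the display above, $\hat\mu_t(\x)=\vf_t(\vh(\x))=\mu_t(\x)$.

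For conclusion 1) I would feed assumption i), $\mu_t(\x)=\vj_t(\OS(\x))$, into the identity just obtained to get $\vf_t(\vh(\x))=\vj_t(\OS(\x))$. Fixing $t=0$ and using that $\vf_0$ is injective (with $n=d$), I invert on the image of $\vf_0$ to obtain $\vh(\x)=\vf_0^{-1}(\vj_0(\OS(\x)))$, and set $\vv:=\vf_0^{-1}\circ\vj_0$. This $\vv$ is well defined on $\mathrm{Range}(\OS)$, because the equality $\vj_0(\OS(\x))=\vf_0(\vh(\x))$ automatically places $\vj_0(\OS(\x))$ in the image of $\vf_0$ for every $\x$; and it is injective as a composition of injective maps ($\vj_0$ by i), $\vf_0^{-1}$ on its domain). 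Hence $\z_{\blambda,t}=\vh_t(\x)=\vv(\OS(\x))$ with $\vv$ injective, which is conclusion 1).

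I do not expect a serious obstacle here: the only points requiring care are (a) the reduction of the degenerate-prior expectation to $\vf_t\circ\vh$, where the centered-noise convention must be invoked, and (b) the bookkeeping that makes $\vf_0^{-1}\circ\vj_0$ a genuine and injective function on $\mathrm{Range}(\OS)$. Neither is deep. It is perhaps worth noting that Theorem \ref{idmodel} is \emph{not} needed for this result, since the hypothesis only matches conditional means, not full conditional laws; the content of Theorem \ref{th:id_os} is simply that, when the fitted model reproduces $\E(\y|\x,\rt)$ and its prior is the deterministic score $\vh$, the unconfounded identification formula \eqref{eq:id}/\eqref{eq:cate_by_bts} is reproduced, with $\vh$ serving as a sufficient score equivalent (up to an injection) to $\OS$.
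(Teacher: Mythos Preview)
Your proof is correct and follows essentially the same route as the paper's: collapse the prior to $\vh$ via $\vk=\bm0$, identify the model mean with $\vf_t\circ\vh$, match it to $\mu_t=\vj_t\circ\OS$ using unconfoundedness, and invert $\vf_t$ to produce the injective $\vv$. The only additional point the paper makes is that, because $\vh$ is shared and overlap holds, $\vf_0^{-1}\circ\vj_0=\vf_1^{-1}\circ\vj_1$, which characterizes the entire solution set $\{(\vf,\vh):\vf_t=\vj_t\circ\Delta^{-1},\ \vh=\Delta\circ\OS\}$; but this extra structure is not required for the statement of the theorem itself.
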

In essence, $\mu_t$ is identified up to an invertible mapping $\vv$, such that $\vf_t=\vj_t\circ\vv\inv$ and $\vh_t=\vv\circ\OS$, with \textit{same} $\vv$ for both $t$. 
The connection to PS\footnote{Note that, some call $\mu_0$ the prognostic score (e.g, \cite{schuler2020increasing,tarr2021estimating}), even without additive noise models. In this alternative terminology, we can also say $\OS$ is a PS, without requiring additive noise models.} is clear; if additive noise model is the ground truth, then $\OS$ is a PS because $\mu_t$ is is a PtS. Then, because $\OS$ is recovered up to $\vv$, the independence $\y(t)\independent\x|\OS(\x)$ is preserved. Assumption i) says that the treatment affects $\E(\y|\vx)$ only through injective $\vj_t$ (which is identified up to $\vv$), and CATEs
are given by $\mu_0$ and an invertible function $\vi\coloneqq\vj_1\circ\vj_0\inv$. See Appendix \ref{sec:p_inj} for real-world examples satisfying i) and the connection to ``independent causal mechanisms'' \citep{janzing2010causal}. With the existence of $\OS$, Assumption ii) simply matches the model to the truth. Note that, with $\vk(\x) = \bm0$, the prior $\z_{\blambda,t} \sim p_{\blambda}(\vz|\vx,t)$ degenerates to function $\vh_{t}(\x)$. 

Interestingly, Theorem \ref{th:id_os} does not depend on Theorem \ref{idmodel}, mainly because i) is strong in the sense that it indicates a PS. To work under general additive noise models with only PtSs, we use our model identifiability \eqref{eq:class} in \cite{wu2021beta}, the main ideas of which is briefly summarized below.

\subsection{Identification without overlap of $\x$}
The advantage of PtS can be more clearly seen when $\x$ does not satisfy overlap. Now, a straightforward estimation of TEs is not possible at a non-overlapping value $\vx$ due to lack of data. However, $\PS_t(\x)$ can map some non-overlapping values to an overlapping value, and overlap of $\PS_t(\x)$ implies but is not implied by overlap of $\x$ \citep{d2020overlap}. In a parallel submission \citep{wu2021beta}, we assume the overlap of a PtS instead and extend Theorem \ref{th:id_os} to this \textit{limited overlap} setting. 
This is a natural next step, because we already model a PtS as the latent variable.

A main result of \cite{wu2021beta} is that the latent variable of our VAE recovers a PtS and identifies the CATE through the model, under limited overlap, similar to the conclusion 1) and 2) in Theorem \ref{th:id_os}. To recover the PtS, we derive a condition and strengthen \eqref{eq:class} so that $\mathcal{A}_0=\mathcal{A}_1$, which compensates for $\PS_0\neq\PS_1$.
Thus, we fully exploit the probabilistic nature of VAE in modeling, and give principled causal estimation based on the identifiability of VAE.

Still, with $\rconf$ observed, it would seem unnecessary to model $\PS_{\rt}(\x)$ by the distribution $p_{\blambda}(\vz|\vx,t)$ (the prior). However, the prior, together with the encoder, quantifies the \textit{uncertainty of scores}--we are uncertain how likely a PS (not only a PtS) can be recovered \citep[Sec.~4.1 \& C.5]{wu2021beta}.

\subsection{Preliminary thoughts under unobserved confounding}
\label{sec:conf}
The positive experimental results motivate us to consider the theory under unobserved confounding.
Moreover, the prior in \eqref{model_indep} is even more natural with $\rconf$ unobserved, since $p_{\PS_{\rt}|\x}$ is not degenerated due to the uncertainty of $\rconf$. Thus, we conjecture that, in our VAE framework, unobserved confounding is treated as a source of uncertainty of scores and is handled in a Bayesian way. We give more considerations for future theoretical work below.


\textbf{Identification.\space} Auxiliary structures (e.g., IVs) can give TE identifications via \textit{control functions} $\CF(\rt,\x)$, conditioning on which the treatment becomes exogenous, that is, $\y(t) \independent \rt|\CF(t,\x)$ \citep{matzkin2007nonparametric,wooldridge2015control}. Control functions can be stochastic, as in \cite{puli2020general}. Consistent TE estimation can be given by a regression of outcome on the treatment and a control function. Our model \eqref{model_indep} can be seen as a two-stage procedure: first, $p_{\blambda}(\vz|\vx,t)$ gives a stochastic control function; second, $p_{\vf}(\vy|\vz,t)$ regresses the outcome. We need to specify the control function learned by Intact-VAE and the required structural constraints. Control functions are recently found under the proxy setting \citep{nagasawa2018identification}, or in the presence of both proxies and IVs \citep{tien2021instrumental}.

\textbf{Estimation.\space} In causal inference, many models, including nonparametric IV regression (NPIV), are stated as \textit{conditional moment restrictions} (CMRs) \citep{newey199316}. Optimizing the ELBO of our VAE, given by \eqref{elbo}, can be seen as finding functions $\vf$ and $\CF$, subject to the CMR $\E_{p_{\btheta}}(\y|\x,\rt)=\E(\y|\x,\rt)$. 
We believe our Intact-VAE framework, possibly with modifications, can be shown to give optimal estimation under the CMR.
There are formal connections between CMRs and \textit{quasi-Bayesian} analysis using KL divergence \citep{zhang2006eps,jiang2008gibbs,kim2002limited}.
For example, \cite{kato2013quasi} uses a quasi-likelihood from the CMR of NPIV to set the prior, and the Gibbs posterior \citep{zhang2006eps,jiang2008gibbs} is a minimizer of an information complexity which has a variational characterization similar to an ELBO. For general CMR models,
\cite{liao2011posterior} extend \cite{kim2002limited} and give the best approximation to the true likelihood function under the CMR by minimizing a KL divergence. Very recently, \cite{wang2021scalable} employ quasi-Bayesian analysis to kernel-based IV methods, but only consider unconditional moments.

\section{Experiments}

We use the proposed Intact-VAE for three types of data, and compare it with existing methods.

Unless otherwise indicated, for each function $\vf,\vg,\vh,\vk,\vr,\vs$ in our VAE, we use a multilayer perceptron (MLP) that has 3*200 hidden units with ReLU activation, and $\bm\lambda=(\vh,\vk)$ depends only on $\rvx$. 
The Adam optimizer with initial learning rate $10^{-4}$ and batch size 100 is employed. 
All experiments use early-stopping of training by evaluating the ELBO on a validation set. We test post-treatment results on training and validation set jointly. 
The treatment and (factual) outcome should not be observed for pre-treatment predictions, so we report them on a testing set.
More details on hyper-parameters and settings are given in each experiment and Appendix.

As in previous works \citep{shalit2017estimating, louizos2017causal}, we report the absolute error of ATE $\epsilon_{ate}\coloneqq |\E_{\mathcal{D}}(y(1)-y(0)) - \E_{\mathcal{D}}\hat{\tau}(\vx)|$, and the square root of empirical PEHE \citep{hill2011bayesian} $\epsilon_{pehe}\coloneqq \E_{\mathcal{D}}((y(1)-y(0))-\hat{\tau}(\vx))^2$ for individual-level TEs.

\subsection{Synthetic dataset}
\label{sec:exp_syn}

\begin{figure*}[h]
    \centering
    \includegraphics[width=.9\textwidth]{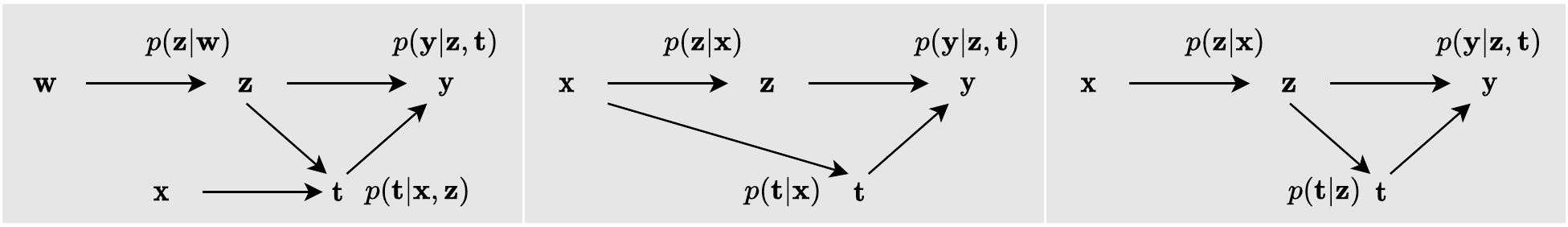}
    \vspace{-.1in}
    \caption{\footnotesize{Graphical models for generating synthetic datasets. From left: IV $\rvx$, ignorability given $\rvx$, and proxy $\rvx$. Note that in the latter two cases, reversing the arrow between $\rvx, \rvz$ does not change any independence relationships, and causal interpretations of the graphs remain the same.}}
    \label{fig:art}
\end{figure*}
\vspace{-.2in}
\begin{equation}
\label{art_model}
\begin{split}
\textstyle
    \rvx \sim \mathcal{N}(\bm\mu, \bm\sigma); 
    \rz|\rvx \sim \mathcal{N}(h(\rvx), \beta k(\rvx)); 
    \rt|\rvx,\rz \sim \bern(\logi(l(\rvx,\rz))); 
    \rvy|\rz,\rt \sim \mathcal{N}(f(\rz,\rt), \alpha). 
\end{split}
\end{equation}
\vspace{-.2in} 

\begingroup

We generate synthetic datasets by \eqref{art_model}. The parameters are different between DGPs:
$\mu_i$ and $\sigma_i$ are randomly generated; the functions $h,k,l$ are linear with random coefficients; and $f_0,f_1$ is built by separated randomly initialized (then fixed) NNs.
We generate two kinds of outcome models, depending on the type of $f$: linear and nonlinear outcome models use random linear functions and NNs with invertible activations and random weights, respectively. We set the outcome and proxy noise level by $\alpha,\beta$ respectively. See Appendix \ref{sec:exp_syn_app} for more details.

We experiment on three different causal structures as shown in graphical models of Figure \ref{fig:art}, by variation on \eqref{art_model}. 
Instead of taking inputs $\rvx, \rz$ in $l$, we consider two special cases: $l\coloneqq l(\rvx)$, then $\rvx$ fully adjusts for confounding, we are in fact \textit{unconfounded}; and $l\coloneqq l(\rz)$, then we have unobserved confounder $\rz$ and \textit{proxy} $\rvx$ of $\rz$. To introduce $\rvx$ as \textit{instrumental variable}, we generate another 1-dimensional random source $\rw$ in the same way as $\rvx$, and use $\rw$ instead of $\rvx$ to generate $\rz|\rw \sim \mathcal{N}(h(\rw), \beta k(\rw))$; except indicated above, other aspects of the models are specified by \eqref{art_model}. 


\setlength{\columnsep}{8pt}
\setlength{\intextsep}{5pt}
\begin{wrapfigure}{r}{0.36\textwidth}
\vspace{-.2in}
  \begin{center}
    \includegraphics[width=0.36\textwidth]{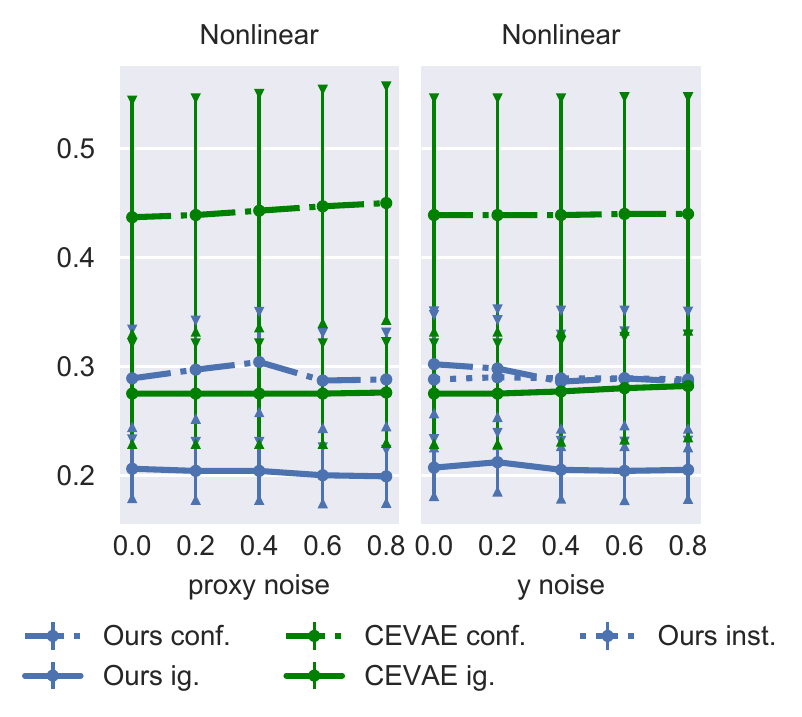}
  \end{center}
  \vspace{-.2in}
  \caption{Pre-treatment \footnotesize{$\sqrt{\epsilon_{pehe}}$ on nonlinear synthetic dataset. Error bar on 100 random DGPs. We adjust one of the noise levels $\alpha,\beta$ in each panel, with another fixed to 0.2. }}
\vspace{-.2in}
\label{nonl_art}
\end{wrapfigure}

For each causal structure, and with the same kind of outcome models, and the same noise levels ($\alpha,\beta$), we evaluate Intact-VAE and CEVAE on 100 random DGPs, with different sets of parameters in \eqref{art_model}. For each DGP, we sample 1500 data points, and split them into 3 equal sets for training, validation, and testing. Both the methods use 1-dimensional latent variable in VAE. For fair comparison, all the hyper-parameters, including type and size of NNs, learning rate, and batch size, are the same for both the methods. 

Figure \ref{nonl_art} shows our method significantly outperforms CEVAE on all cases
Both methods work the best under unconfoundedness (``ig.''), as expected. The performances of our method on IV (``inst.'') and proxy (``conf.'') settings match that of CEVAE under unconfoundedness, showing the effective deconfounding. See Appendix \ref{sec:exp_syn_app} for results on linear outcome. Results for ATE and post-treatment are similar.

\setlength{\columnsep}{8pt}
\setlength{\intextsep}{5pt}
\begin{wrapfigure}{r}{0.4\textwidth}
\vspace{-.2in}
  \begin{center}
    \includegraphics[width=0.4\textwidth]{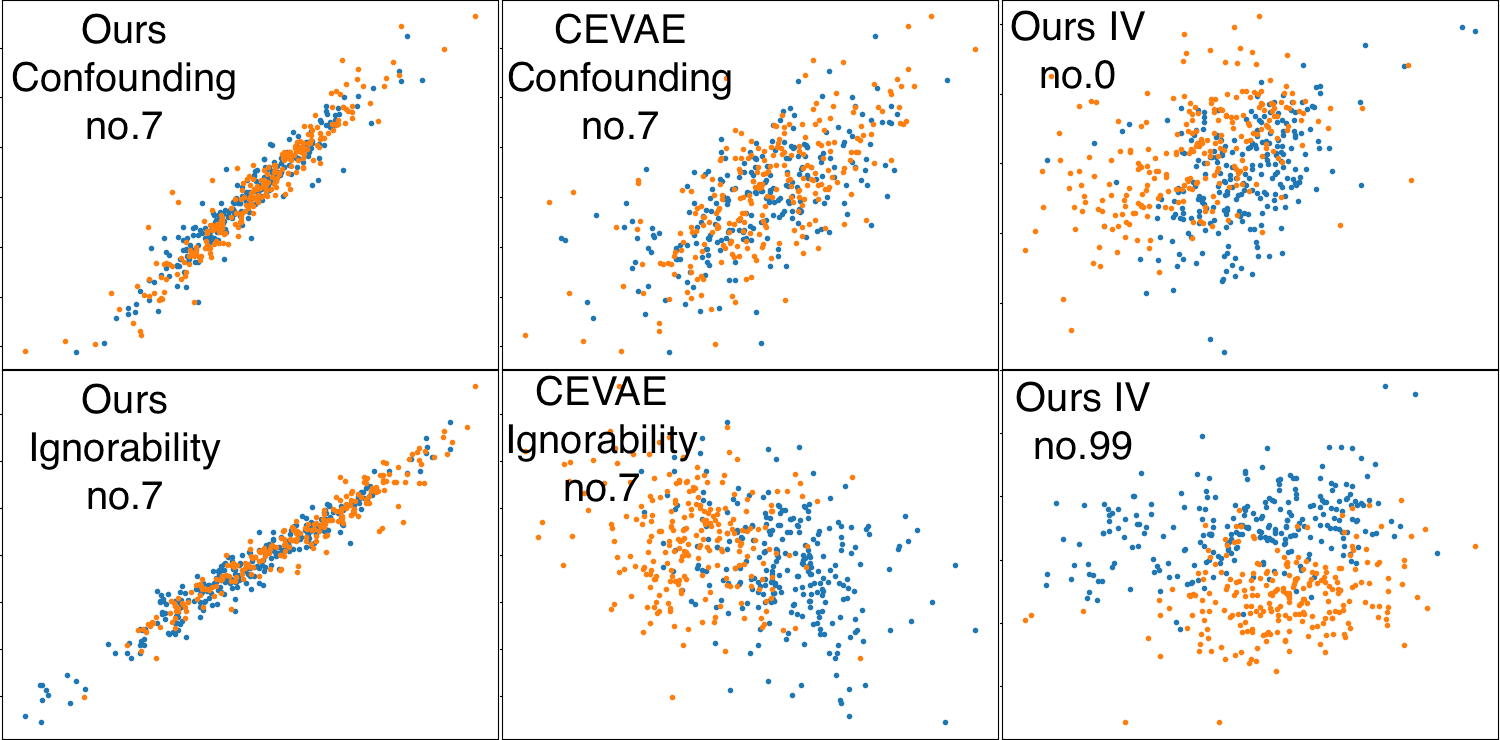}
  \end{center}
   \vspace{-.2in}
 
  \caption{\footnotesize{Plots of recovered 
  - true 
  latent on the nonlinear outcome. Blue: $t=0$, Orange: $t=1$. $\alpha,\beta= 0.4$. ``no.'' indicates index among the 100 random models.}} 
\label{recover}
\end{wrapfigure}

Here, the true latent $\rz$ is a PS, and there are no better candidate PSs than $\rz$, because $f_t$ is invertible and no information can be dropped from $\rz$. 
Thus, as shown in Figure \ref{recover}, our method learns representation as an approximate affine transformation of the true latent value, as a result of our model identifiability. More latent plots are in Appendix \ref{sec:plots} (the end of the paper). As expected, both recovery and estimation are better with unconditional prior $p(\rz|\rvx)$, and we see examples of bad recovery using conditional $p(\rz|\rvx,\rt)$ in Appendix Figure \ref{fig:bad_recover}. CEVAE shows much lower quality of recovery, particularly with large noises. Under the IV setting, while TEs are estimated as well as for the proxy setting, the relationship to the true latent is significantly obscured, because the true latent is correlated to IV $\x$ only given $\rt$, while we model it by $p(\rz|\rvx)$. This confirms that our method does not need to recover the true confounder distribution.

We see our method is robust to the unknown noise level.
This indicates that noises are learned by our VAE. 
Appendix \ref{sec:plots} shows that the noise level affects how well we recover the latent variable. 

\endgroup

\subsection{IHDP benchmark dataset}
This experiment shows our balanced estimator matches the state-of-the-art methods specialized for standard unconfoundedness. The IHDP dataset \citep{hill2011bayesian} is widely used to evaluate machine learning based causal inference methods, e.g. \cite{shalit2017estimating, shi2019adapting}. 

The generating process is as following \citep[Sec.~4.1]{hill2011bayesian}.
\begin{equation}
    \ry(0) \sim \mathcal{N}(e^{\va^T(\x+\vb)},1),\quad \ry(1) \sim \mathcal{N}(\va^T\x-o,1),
\end{equation}
where $\vb$ is constant with all elements equal to $0.5$, $\va$ and $o$ are a random parameters.
Thus, it is not necessary to condition on $\x$, but the linear PS $\va^T\x$ is sufficient. See Appendix \ref{sec:ihdp_app} for more details.

To see our balancing property clearly, we add two components specialized for balancing from \cite{shalit2017estimating} into our method.
First, we use two separate NNs to build the two outcome functions $\vf_t(\rvz),t=0,1$ in our model \eqref{model_indep}. Second, we add to our ELBO \eqref{elbo} a regularization term, which is the Wasserstein distance \citep{cuturi2013sinkhorn} between $\E_{\data\sim p(\x|\rt=t)}p_{\blambda}(\rvz|\rvx),t\in\binset$.

As shown in Table \ref{t:ihdp}, Intact-VAE outperforms or matches the state-of-the-art methods. In particular, our method has the \textit{best} ATE estimation; and it has the \textit{best} individual-level estimation, adding the two components from \cite{shalit2017estimating}. 
We can see in the caption of Table \ref{t:ihdp}, the specialized additions do \textit{not} really improve our method, only causing a tradeoff between CATE and ATE estimation, and this may due to the tradeoff between fitting and balancing.  
And notably, our method outperforms other generative models (CEVAE and GANITE) by large margins.

We find higher than 1-dimensional $\z$ in Intact-VAE gives better results, because we have \textit{discrete} true PS due to the existence of discrete covariates.
We report results with 10-dimensional latent variable. The robustness of VAE under model misspecification was also observed by \cite{louizos2017causal}, where they used 5-dimensional Gaussian latent variable to model a binary ground truth.

\begin{table}[h]

\centering
\scriptsize
\caption{Errors on IHDP. The mean and std are calculated over 1000 random data generating models. *Results with modifications are $\epsilon_{ate}=.31_{\pm .01}/.30_{\pm .01}$ and $\sqrt{\epsilon_{pehe}}=\textbf{.77}_{\pm .02}/\textbf{.69}_{\pm .02}$. \textbf{Bold} indicates method(s) that are \textit{significantly} better than all the others. The results of the others are taken from \cite{shalit2017estimating}, except GANITE \citep{yoon2018ganite} and CEVAE \citep{louizos2017causal}.} 

\vspace{-.05in}
\begin{tabular}{p{.8cm}p{1.cm}p{1.45cm}p{1.45cm}p{1.45cm}p{1.45cm}p{1.45cm}p{1.6cm}}
\toprule 
Method &\textbf{TMLE} &\textbf{BNN} &\textbf{CFR} &\textbf{CF} &\textbf{CEVAE} &\textbf{GANITE} &\textbf{Ours*}\\
\midrule 
$\epsilon_{ate}$ &NA/.30$_{\pm .01}$ &.42$_{\pm .03}$/.37$_{\pm .03}$  
&.27$_{\pm.01}$/.25$_{\pm.01}$  &.40$_{\pm.03}$/\textbf{.18}$_{\pm.01}$ &.46$_{\pm.02}$/.34$_{\pm.01}$   &.49$_{\pm.05}$/.43$_{\pm.05}$  &$.31_{\pm .01}$/$.30_{\pm .01}$
\\
\midrule 
$\sqrt{\epsilon_{pehe}}$ &NA/5.0$_{\pm .2}$ &2.1$_{\pm .1}$/2.2$_{\pm .1}$  
&\textbf{.76}$_{\pm.02}$/\textbf{.71}$_{\pm.02}$ &3.8$_{\pm.2}$/3.8$_{\pm.2}$ &2.6$_{\pm.1}$/2.7$_{\pm.1 }$  &2.4$_{\pm .4}$/1.9$_{\pm .4}$ &\textbf{.77}$_{\pm .02}$/\textbf{.69}$_{\pm .02}$
\\
\bottomrule 
\end{tabular}

\label{t:ihdp}
\end{table}

\subsection{Pokec social network dataset}
We show our method is the best compared with the methods specialized for networked deconfounding, a challenging problem in its own right. Pokec \citep{leskovec2014snap} is a real world social network dataset. We experiment on a semi-synthetic dataset based on Pokec, introduced in \cite{veitch2019using}, and use exactly the same pre-processing and generating procedure. The pre-processed network has about 79,000 vertexes (users) connected by 1.3 $\times 10^6$ undirected edges.  The subset of users used here are restricted to three living districts that are within the same region. The network structure is expressed by binary adjacency matrix $\mG$. 

Each user has 12 attributes, among which  \texttt{district}, \texttt{age}, or \texttt{join date} is used as a confounder $\rz$ to build 3 different datasets, with remaining 11 attributes used as covariate $\rvx$. Treatment $\rt$ and outcome $\rvy$ are synthesised as following:
\begin{equation}
\label{pokec}
    \rt \sim \bern(g(\rz)), \quad \rvy = \rt + 10(g(\rz)-0.5) + \epsilon, \text{ where $\epsilon$ is standard normal. }
\end{equation}
Note that \texttt{district} is of 3 categories; \texttt{age} and \texttt{join date} are also discretized into three bins. There is a PS that is $g(\rz)$, which maps the three categories and values to $\{0.15, 0.5, 0.85\}$. 

As in \cite{veitch2019using}, we split the users into 10 folds, test on each fold and report the mean and std of pre-treatment ATE predictions. We further separate the rest of users (in the other 9 folds) by 6:3, for training and validation. Table \ref{tbl:Pokec} shows the results. In addition, the pre-treatment $\sqrt{\epsilon_{pehe}}$ for \texttt{Age}, \texttt{District}, and \texttt{Join date} confounders are 1.085, 0.686, and 0.699 respectively, practically the same as the ATE errors. \cite{veitch2019using} do not give individual-level prediction. 

\begin{table}[h]
\centering
\scriptsize

\caption{Pre-treatment ATE on Pokec. Ground truth ATE is 1, as we can see in \eqref{pokec}. ``Unadjusted'' estimates ATE by $\E_{\mathcal{D}}(y_1)-\E_{\mathcal{D}}(y_0)$. ``Parametric'' is a stochastic block model for networked data \citep{gopalan2013efficient}. ``Embed-'' denotes the best alternatives given by \citep{veitch2019using}. \textbf{Bold} indicates method(s) that are \textit{significantly} better than all the others. 20-dimensional latent variable in Intact-VAE works better, and its result is reported. The results of the other methods are taken from \citep{veitch2019using}.} \label{tbl:Pokec}

\vspace{-.05in}
\begin{tabular}{cccccc}
\toprule
{}                & {Unadjusted} & {Parametric} & {Embed-Reg.} & {Embed-IPW} & {Ours}                  \\
\midrule
\texttt{Age}       & 4.34 $\pm$ 0.05                & 4.06 $\pm$ 0.01                & 2.77 $\pm$ 0.35                    & 3.12 $\pm$ 0.06                   & \textbf{2.08} $\pm$ 0.32                                \\
\texttt{District}  & 4.51 $\pm$ 0.05                & 3.22 $\pm$ 0.01                & \textbf{1.75} $\pm$ 0.20                    & \textbf{1.66} $\pm$ 0.07                   & \textbf{1.68} $\pm$ 0.10                           \\
\texttt{Join Date} & 4.03 $\pm$ 0.06                & 3.73 $\pm$ 0.01                & 2.41 $\pm$ 0.45                    & 3.10 $\pm$ 0.07                   & \textbf{1.70} $\pm$ 0.13 \\
\bottomrule
\end{tabular}
\end{table}

Intact-VAE is expected to learn a PS from $\mG, \rvx$, if we can exploit the network structure effectively. 
Given the huge network structure, most users can practically be identified by their attributes and neighborhood structure, which means $\rz$ can be roughly seen as a deterministic function of $\mG, \rvx$. 
This idea is comparable to Assumptions 2 and 4 in \cite{veitch2019using}, which postulate directly that a balancing score can be learned in the limit of infinite large network. 

To extract information from the network structure, we use Graph Convolutional Network (GCN) \citep{DBLP:conf/iclr/KipfW17} in the prior and encoder of Intact-VAE. 
Note that GCN cannot be trained by mini-batch, instead, we perform batch gradient decent using all data for each iteration, with initial learning rate $10^{-2}$. We use dropout \citep{srivastava2014dropout} with rate 0.1 to prevent overfitting.

GCN need to take as inputs the network matrix $\mG$ and the covariates matrix $\mX \coloneqq (\vx_1^T,\dotsc,\vx_M^T)^T$ of \textit{all} users, where $M$ is user number, regardless of whether it is in training, validation, or testing phase; and it outputs a representation matrix $\mR$, again for all users. To enable sample separation, we need to make sure the treatment and outcome are used only in the respective phase, e.g., $(y_m,t_m)$ of a testing user $m$ is only used in testing. During training, we select the rows in $\mR$ that correspond to users in training set. Then, treat this \textit{training representation matrix} as if it is the covariate matrix for a non-networked dataset, that is, the downstream networks in conditional prior and encoder are the same as in the other two experiments, except that they take $(\mR_{m,:})^T$ where $\vx_m$ was expected as input. And we have respective selection operations for validation and testing. We can still train Intact-VAE including GCN by Adam, simply setting the gradients of non-seleted rows of $\mR$ to 0.

\section{Conclusion}

In this work, we proposed Intact-VAE for CATE estimation. Our generative model is identifiable and has a sufficient score as its latent variable. Our method outperforms or matches state-of-the-art methods under diverse settings including unobserved confounding. In Sec.~\ref{sec:principled}, we explained why the current VAE methods are unsatisfactory from a more ``causal'' viewpoint. We gave theoretical analysis of Intact-VAE under unconfoundedness and discussed parallel and future theoretical work--TE identification without overlap and approaches to identification and optimal estimation under unobserved confounding. We believe this series of work will also pave the way towards principled causal effect estimation by other deep architectures, given the fast advances in deep identifiable models. 
For example, recently, \cite{khemakhem2020ice} provide identifiability to deep energy models, and \cite{roeder2020linear} extend the result to a wide class of state-of-the-art deep discriminative models. We hope this work will inspire other methods based on deep identifiable models.

\bibliography{example_paper}

\clearpage

\appendix

\section{Proofs and additional theoretical results}
\label{sec:proofs}



Theorem \ref{cate_by_bts} is rather straightforward from \eqref{eq:ps_indp} (see Lemma \ref{prop:pscore}) and the definition of PtS, and thus its proof is omitted.

We give main properties of Pt-score as following. 
\begin{lemma}
\label{prop:pscore}
If $\rvv$ gives exchangeability, and $\PS_{\rt}(\rvv)$ is a Pt-score, then $\rvy(t)\independent \rvv,\rt|\PS_t(\rvv)$.
\end{lemma}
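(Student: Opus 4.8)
The plan is to derive the claim purely from elementary properties of conditional independence, combining exchangeability with the defining property of a Pt-score. Throughout, write $P := \PS_t(\rvv)$ and note that $P$ is a deterministic (measurable) function of $\rvv$; hence conditioning on $(\rvv,P)$ is the same as conditioning on $\rvv$, so $P$ may be freely inserted into or deleted from any conditioning set that already contains $\rvv$, without changing the validity of a conditional independence statement. This small observation is the only slightly delicate point in the argument.

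First I would restate the two ingredients in the form needed for combination. Exchangeability of $\rvv$ means $\rvy(t)\independent\rt\mid\rvv$, which by the remark above is equivalent to $\rvy(t)\independent\rt\mid\rvv,P$. The Pt-score property (Definition~\ref{def:scores}, read with $\rvv$ in place of $(\rconf,\rvx)$) gives $\rvy(t)\independent\rvv\mid P$.

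Then I would apply the contraction property of conditional independence: if $A\independent B\mid C$ and $A\independent D\mid(B,C)$, then $A\independent(B,D)\mid C$. Taking $A=\rvy(t)$, $B=\rvv$, $C=P$, and $D=\rt$, the two hypotheses are exactly the two displayed statements above, and the conclusion is $\rvy(t)\independent(\rvv,\rt)\mid P=\PS_t(\rvv)$, which is the assertion of the lemma.

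I do not anticipate a genuine obstacle; the argument is a two-line graphoid manipulation, and the only care needed is the repeated use of $\sigma(\rvv)$-measurability of $\PS_t(\rvv)$ so that exchangeability "$\rvy(t)\independent\rt\mid\rvv$" can be read as the second premise of the contraction step. If one prefers to avoid invoking graphoid axioms by name, the same conclusion follows by a direct density computation: factor $p\bigl(\rvy(t),\rvv,\rt\mid P\bigr)=p\bigl(\rvy(t)\mid\rvv,\rt,P\bigr)\,p(\rvv,\rt\mid P)$, drop $\rt$ from the first factor by exchangeability, and then replace $p\bigl(\rvy(t)\mid\rvv,P\bigr)$ by $p\bigl(\rvy(t)\mid P\bigr)$ using the Pt-score property, which exhibits the joint conditional density as the product $p\bigl(\rvy(t)\mid P\bigr)\,p(\rvv,\rt\mid P)$.
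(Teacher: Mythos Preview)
Your proposal is correct and follows essentially the same route as the paper: both arguments insert $P=\PS_t(\rvv)$ into the conditioning set of exchangeability (using that $P$ is $\sigma(\rvv)$-measurable) to obtain $\rvy(t)\independent\rt\mid\rvv,P$, and then combine this with the Pt-score property $\rvy(t)\independent\rvv\mid P$ via the contraction rule. Your alternative density-factorization argument is a nice addition but not needed for the comparison.
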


The following three properties of conditional independence will be used repeatedly in proofs.
\begin{proposition}[Properties of conditional independence]
\label{indep_prop}
\citep[1.1.55]{pearl2009causality} For random variables $\rvw, \rvx, \rvy, \rvz$. We have:
\begin{equation*}
    \begin{split}
        \rvx \independent \rvy|\rvz \land \rvx \independent \rvw|\rvy, \rvz &\implies \rvx \independent \rvw,\rvy|\rvz \text{ (Contraction)}. \\
        \rvx \independent \rvw,\rvy|\rvz &\implies \rvx \independent \rvy|\rvw,\rvz \text{ (Weak union)}. \\ 
        \rvx \independent \rvw,\rvy|\rvz &\implies \rvx \independent \rvy|\rvz \text{ (Decomposition)}.
    \end{split}
\end{equation*}
\end{proposition}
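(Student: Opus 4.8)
The plan is to prove each of the three implications directly from the density-factorization characterization of conditional independence: $\rvx \independent \rvy \mid \rvz$ holds iff $p(x \mid y, z) = p(x \mid z)$ (equivalently $p(x,y\mid z) = p(x\mid z)\,p(y\mid z)$) wherever the conditioning densities are positive. Throughout I would assume the relevant conditional densities exist and are positive so that every conditioning is well-defined; this is the only genuine regularity caveat and is the continuous analogue of the positivity one invokes in the discrete case. I would treat the three properties in the order Decomposition, Weak union, Contraction, since Weak union reuses Decomposition.

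For \emph{Decomposition}, I would start from the hypothesis $\rvx \independent \rvw,\rvy \mid \rvz$, written as $p(x, w, y \mid z) = p(x \mid z)\,p(w, y \mid z)$, and integrate (or sum) out $\rvw$. The left side becomes $p(x, y \mid z)$ and the right becomes $p(x\mid z)\,p(y\mid z)$, which is exactly $\rvx \independent \rvy \mid \rvz$. This step is immediate.

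For \emph{Weak union}, the target is $p(x \mid y, w, z) = p(x \mid w, z)$. The hypothesis $\rvx \independent \rvw,\rvy \mid \rvz$ gives $p(x \mid w, y, z) = p(x \mid z)$. Applying the just-proved Decomposition to the same hypothesis yields $\rvx \independent \rvw \mid \rvz$, i.e. $p(x \mid w, z) = p(x \mid z)$. Chaining the two equalities gives $p(x \mid w, y, z) = p(x \mid z) = p(x \mid w, z)$, which is the desired conditional independence. For \emph{Contraction}, the target is $p(x \mid w, y, z) = p(x \mid z)$. The second hypothesis $\rvx \independent \rvw \mid \rvy, \rvz$ gives $p(x \mid w, y, z) = p(x \mid y, z)$, and the first hypothesis $\rvx \independent \rvy \mid \rvz$ gives $p(x \mid y, z) = p(x \mid z)$; composing the two yields $p(x \mid w, y, z) = p(x \mid z)$, hence $\rvx \independent \rvw, \rvy \mid \rvz$.

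The only obstacle worth flagging is the measure-theoretic bookkeeping about null sets and positivity, so that every conditional density appearing above is defined on a common support and the chained equalities hold almost everywhere rather than merely formally. Once positivity is assumed this is routine, and I would state it once at the outset rather than repeat it in each case. Since these are exactly the standard graphoid axioms, I would also note that the result is classical \citep[1.1.55]{pearl2009causality} and include the short densities argument only for completeness.
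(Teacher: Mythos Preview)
Your argument is correct; the density-factorization derivations of Decomposition, Weak union, and Contraction are exactly the standard ones, and your caveat about positivity/null sets is the appropriate regularity assumption. The paper, however, does not prove this proposition at all: it is stated with a citation to \citep[1.1.55]{pearl2009causality} and then used as a black box in the proofs of Lemma~\ref{prop:pscore} and the appendix results. So there is no ``paper's own proof'' to compare against; your write-up supplies a self-contained justification that the paper simply delegates to the reference.
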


\begin{proof}[of Lemma \ref{prop:pscore}]
From $\rvy(t)\independent\rt|\rvv$ (\textit{exchangeability} of $\rvv$), and since $\PS_t$ is a \textit{function} of $\rvv$, we have $\rvy(t)\independent\rt|\PS_t(\rvv),\rvv$ (1).

From (1) and $\rvy(t)\independent\rvv|\PS_t(\rvv)$ (definition of Pt-score), using contraction rule, we have \\ $\rvy(t)\independent\rt,\rvv|\PS_t(\rvv)$ for both $t$. 
\end{proof}
Apply the proposition to our setting, we have eq.~\eqref{eq:ps_indp}.

In the proof of Theorem \ref{idmodel}, all equations and variables should condition on $t$, and we omit the conditioning in notation for convenience. 

The main part of our model identifiability follows from that of Theorem 1 in \cite{khemakhem2020variational}, but now adapted to the dependency on $t$. Here we give an outline of the proof, and the details can be easily filled by referring to \cite{khemakhem2020variational}.

\begin{proof}[of Theorem \ref{idmodel}]

Using i) and ii) , we transform $p_{\vf,\bm\lambda}(\rvy|\rvx,t)=p_{\vf',\bm\lambda'}(\rvy|\rvx,t)$ into equality of noiseless distributions, that is, 
\begin{equation}
    q_{\vf',\bm\lambda'}(\rvy)=q_{\vf,\bm\lambda}(\rvy)\coloneqq p_{\bm\lambda}(\vf^{-1}(\rvy)|\rvx,t)vol(\mJ_{\vf^{-1}}(\rvy))\mathbb{I}_{\mathcal{Y}}(\rvy)
\end{equation}
where $p_{\bm\lambda}$ is the Gaussian density function of the conditional prior defined in \eqref{model_indep} and $vol(A)\coloneqq\sqrt{\det A A^T}$.
$q_{\vf',\bm\lambda'}$ is defined similarly to $q_{\vf,\bm\lambda}$.

Then, plug \eqref{model_indep} into the above equation, and take derivative on both side at the $\vx^o$ in \ref{ass:inv_jac}, we have
\begin{equation}
    \mathcal{F'}(\rvy)=\mathcal{F}(\rvy)\coloneqq\mL^T\vt(\vf^{-1}(\rvy))-\bbeta
\end{equation}
where $\vt(\rvz)\coloneqq(\rvz,\rvz^2)^T$ is the sufficient statistics of factorized Gaussian, and $\bbeta_t\coloneqq(\alpha_t(\vx_1)-\alpha_t(\vx_0),...,\alpha_t(\vx_{2n})-\alpha_t(\vx_0))^T$ where $\alpha_t(\rvx;\bm\lambda_t)$ is the log-partition function of the conditional prior. $\mathcal{F'}$ is defined similarly to $\mathcal{F}$, but with $\vf',\bm\lambda',\alpha'$

Since $\mL$ is invertible, we have 
\begin{equation}
    \vt(\vf^{-1}(\rvy))=\mA\vt(\vf'^{-1}(\rvy))+\vc
\end{equation}
where $\mA = \mL^{-T}\mL'^{T}$ and $\vc = \mL^{-T}(\bbeta-\bbeta')$.

The final part of the proof is to show, by following the same reasoning as in Appendix B of \cite{sorrenson2019disentanglement}, that $\mA$ is a sparse matrix such that
\begin{equation}
    \mA=\begin{pmatrix}
    \diag(\va) & \mO \\
    \diag(\vu) & \diag(\va^2)
    \end{pmatrix}
\end{equation}
where $\mA$ is partitioned into four $n$-square matrices. Thus
\begin{equation}
    \vf^{-1}(\rvy)=\diag(\va)\vf'^{-1}(\rvy)+\vb
\end{equation}
where $\vb$ is the first half of $\vc$.
\end{proof}

\begin{proof}[of Theorem \ref{th:id_os}]
Under i), and because $\vf_t$ is injective, we have 
\begin{equation}
\label{eq:mean_match}
    \E_{p_{\btheta}}(\y|\x,\rt)=\E(\y|\x,\rt) \implies \vf_{t}\circ\vh(\vx)=\vj_{t}\circ\OS(\vx) \text{ on $(\vx,t)$ such that $p(t,\vx) > 0$}.
\end{equation}
We show the solution set of (\theequation) is
\begin{equation}
    \{(\vf,\vh)|\vf_t=\vj_t\circ\Delta\inv,\vh=\Delta\circ\OS,\Delta: \mathcal{P} \to \R^n \text{ is injective}\}.
\end{equation}

By i) and ii), with injective $\vf_t,\vj_t$ and $\dim(\z)=\dim(\y) \geq \dim(\OS)$, for any $\Delta$ above, there exists a functional parameter $\vf_t$ such that $\vj_t=\vf_t\circ\Delta$. Thus, set (\theequation) is non-empty, and any element is indeed a solution because $\vf_t\circ\vh=\vj_t\circ\Delta\inv\circ\Delta\circ\OS=\vj_t\circ\OS$.

Any solution of \eqref{eq:mean_match} should be in (\theequation). A solution should satisfy $\vh(\vx)=\vf_{t}\inv\circ\vj_{t}\circ\OS(\vx)$ for both $t$ since $\vx$ is overlapping. This means the \textit{injective} function $\vf_{t}\inv\circ\vj_{t}$ should \textit{not} depend on $t$, thus it is one of the $\Delta$ in (\theequation). 

We proved conclusion 1) with $\vv\coloneqq\Delta$. And conclusion 2) is quickly seen from
\begin{equation}
    \hat{\mu}_{t}(\vx)=\vf_t(\vh(\vx))=\vj_t\circ\vv\inv(\vv\circ\OS(\vx))=\vj_{t}(\OS(\vx))=\mu_{t}(\vx).
\end{equation}
\end{proof}

The following is a refined version of Theorem 4 in \cite{khemakhem2020variational}.
The result is proved by assuming: i) our VAE is flexible enough to ensure the ELBO is tight (equals to the true log likelihood) for some parameters; ii) the optimization algorithm can achieve the \textit{global} maximum of ELBO (again equals to the log likelihood).
\begin{proposition}[Consistency of VAE]
\label{consistency}
Given the VAE \eqref{model_indep}--\eqref{more_param}, and let $p^*(\rvx,\rvy,\rt)$ be the true observational distribution, assume 

\renewcommand{\labelenumi}{\roman{enumi})}
\begin{enumerate}
\def\theenumi{\roman{enumi})}
    \item there exists $(\bar{\vtheta}, \bar{\bm\phi})$ such that $p_{\bar{\vtheta}}(\rvy|\rvx,\rt)=p^*(\rvy|\rvx,\rt)$ and $p_{\bar{\vtheta}}(\rvz|\rvx,\rvy,\rt)=q_{\bar{\bm\phi}}(\rvz|\rvx,\rvy,\rt)$;
    
    \item \label{ass:prime} the ELBO $\E_{\mathcal{D} \sim p^*}(\mathcal{L}(\rvx,\rvy,\rt; \vtheta, \bm\phi))$ \eqref{elbo} can be optimized to its global maximum at $(\vtheta', \bm\phi')$;
\end{enumerate}

Then, in the limit of infinite data, $p_{\vtheta'}(\rvy|\rvx,\rt)=p^*(\rvy|\rvx,\rt)$ and $p_{\vtheta'}(\rvz|\rvx,\rvy,\rt)=q_{\bm\phi'}(\rvz|\rvx,\rvy,\rt)$.
\end{proposition}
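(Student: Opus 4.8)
The plan is to mirror the proof of Theorem~4 in \cite{khemakhem2020variational}, adapting it to our conditional setting where everything is additionally conditioned on $\rt$. The key observation is that the ELBO can be rewritten, for any parameters, as
\begin{equation}
\label{eq:elbo_decomp}
    \E_{\mathcal{D}\sim p^*}\mathcal{L}(\rvx,\rvy,\rt;\vtheta,\bm\phi) = \E_{p^*}\log p_{\vtheta}(\rvy|\rvx,\rt) - \E_{p^*}\KL\!\left(q_{\bm\phi}(\rvz|\rvx,\rvy,\rt)\,\middle\Vert\, p_{\vtheta}(\rvz|\rvx,\rvy,\rt)\right),
\end{equation}
which is just \eqref{elbo} averaged over the data distribution. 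The first term equals $\E_{p^*}\log p_{\vtheta}(\rvy|\rvx,\rt)$, and by Gibbs' inequality (or equivalently, nonnegativity of KL between the data conditional $p^*(\rvy|\rvx,\rt)$ and the model conditional) it is bounded above by $\E_{p^*}\log p^*(\rvy|\rvx,\rt)$, with equality iff $p_{\vtheta}(\rvy|\rvx,\rt)=p^*(\rvy|\rvx,\rt)$ almost everywhere. The second term is nonnegative and vanishes iff $q_{\bm\phi}(\rvz|\rvx,\rvy,\rt)=p_{\vtheta}(\rvz|\rvx,\rvy,\rt)$ almost everywhere.

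First I would use assumption i): plugging $(\bar{\vtheta},\bar{\bm\phi})$ into \eqref{eq:elbo_decomp}, the KL term vanishes (since the variational posterior matches the model posterior) and the likelihood term attains its maximal value $\E_{p^*}\log p^*(\rvy|\rvx,\rt)$. Hence the global maximum of the ELBO, in the infinite-data limit, is exactly $\E_{p^*}\log p^*(\rvy|\rvx,\rt)$, and it is attained at $(\bar{\vtheta},\bar{\bm\phi})$. Next I would invoke assumption ii): $(\vtheta',\bm\phi')$ also attains this global maximum. Therefore, reading off \eqref{eq:elbo_decomp} at $(\vtheta',\bm\phi')$, both nonnegative ``defects'' must be zero: the likelihood term must equal $\E_{p^*}\log p^*(\rvy|\rvx,\rt)$, forcing $p_{\vtheta'}(\rvy|\rvx,\rt)=p^*(\rvy|\rvx,\rt)$, and the KL term must be zero, forcing $q_{\bm\phi'}(\rvz|\rvx,\rvy,\rt)=p_{\vtheta'}(\rvz|\rvx,\rvy,\rt)$. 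Both conclusions hold $p^*$-almost everywhere, which is the claimed statement.

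The only genuinely delicate points are measure-theoretic bookkeeping rather than conceptual obstacles. One needs the empirical ELBO to converge to its population counterpart $\E_{p^*}\mathcal{L}$ uniformly enough that the argmax converges — this is what ``in the limit of infinite data'' abbreviates, and it is standard (a law of large numbers plus the usual M-estimation argument), so I would state it at that level of rigor, consistent with \cite{khemakhem2020variational}. One also needs the conditional KL decomposition \eqref{eq:elbo_decomp} to be valid, i.e.\ the relevant densities to exist and the expectations to be finite; under the Gaussian parametrization \eqref{more_param} with the stated smoothness this is routine. The main thing to be careful about is that the ``equality'' conclusions are only almost-everywhere with respect to $p^*(\rvx,\rvy,\rt)$ — in particular, on the support where $p^*(t,\rvx)>0$ — which is all that is needed downstream (e.g.\ to combine with Theorem~\ref{idmodel}, whose hypothesis ii) concerns the support of $\blambda$). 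I expect no substantive obstacle; the argument is essentially ``the true parameters already maximize the population ELBO, and at the maximum both slack terms are forced to zero.''
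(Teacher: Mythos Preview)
Your proposal is correct and follows essentially the same argument as the paper's proof: decompose the population ELBO into the model log-likelihood term and the posterior KL term, use assumption i) to show the global maximum equals $\E_{p^*}\log p^*(\rvy|\rvx,\rt)$, then use assumption ii) to conclude that at $(\vtheta',\bm\phi')$ both slack terms must vanish. Your additional remarks on almost-everywhere equality and the infinite-data limit are more careful than the paper's own treatment, but the core reasoning is identical.
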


\begin{proof}
From i), we have $\mathcal{L}(\rvx,\rvy,\rt; \bar{\vtheta}, \bar{\bm\phi})=\log p^*(\rvy|\rvx,\rt)$. But we know $\mathcal{L}$ is upper-bounded by $\log p^*(\rvy|\rvx,\rt)$. So, $\E_{\mathcal{D} \sim p^*}(\log p^*(\rvy|\rvx,\rt))$ should be the global maximum of the ELBO (even if the data is finite).

Moreover, note that, for any $(\vtheta, \bm\phi)$, we have $\KL(p_{\vtheta}(\rvz|\rvx,\rvy,\rt) \Vert q_{\bm\phi}(\rvz|\rvx,\rvy,\rt) \geq 0$ and, in the limit of infinite data, $\E_{\mathcal{D} \sim p^*}(\log p_{\vtheta}(\rvy|\rvx,\rt)) \leq \E_{\mathcal{D} \sim p^*}(\log p^*(\rvy|\rvx,\rt))$. Thus, the global maximum of ELBO is achieved \textit{only} when $p_{\vtheta}(\rvy|\rvx,\rt)=p^*(\rvy|\rvx,\rt)$ and $p_{\vtheta}(\rvz|\rvx,\rvy,\rt)=q_{\bm\phi}(\rvz|\rvx,\rvy,\rt)$.
\end{proof}

Based on this, consistent prior estimation of CATE follows directly from TE identification. The following is a corollary of Theorem \ref{th:id_os}.
\begin{corollary}
\label{th:estimation}
Under the conditions of Theorem \ref{th:id_os}, 
further require the consistency of Intact-VAE.
Then, in the limit of infinite data, we have $\mu_{t}(\x)=\vf_{t}(\vh_{t}(\x))$
where $\vf,\vh$ are the optimal parameters learned by the VAE.
\end{corollary}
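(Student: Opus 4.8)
The plan is to derive the corollary by chaining two results already in hand: the consistency statement, Proposition~\ref{consistency}, and the identification statement, Theorem~\ref{th:id_os}. Theorem~\ref{th:id_os} is purely structural: once the model is instantiated so that its assumption ii) holds (namely $n=d$, each $\vf_t$ injective, $\vh_0=\vh_1$, and $\vk=\bm0$ --- architectural choices made when setting up \eqref{model_indep}), \emph{any} parameter whose induced conditional mean $\E(\y|\x,\rt)$ agrees with the truth already yields $\mu_t(\x)=\vf_t(\vh_t(\x))$. So the only thing left to establish is that the parameters the VAE actually converges to satisfy this mean-matching premise, which is precisely what consistency supplies.

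First I would invoke Proposition~\ref{consistency}: ``requiring the consistency of Intact-VAE'' means its hypotheses hold --- realizability of $p^*(\rvy|\rvx,\rt)$ within the (restricted) model family and attainment of the global ELBO maximum --- so in the infinite-data limit the learned parameters $(\vtheta',\bphi')$ satisfy $p_{\vtheta'}(\rvy|\rvx,\rt)=p^*(\rvy|\rvx,\rt)$. Taking the conditional expectation of $\y$ on both sides gives $\E_{p_{\vtheta'}}(\y|\x,\rt)=\E(\y|\x,\rt)$ on the support of $p(t,\vx)$, which is exactly the hypothesis ``$\E_{p_{\btheta}}(\y|\x,\rt)=\E(\y|\x,\rt)$'' of Theorem~\ref{th:id_os} with $\btheta=\vtheta'$. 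Assumption i) of Theorem~\ref{th:id_os} (injective separation, $\mu_t=\vj_t\circ\OS$) is carried over from the corollary's hypotheses, and assumption ii) holds by construction of the VAE, so conclusion 2) of Theorem~\ref{th:id_os} applies and gives $\mu_t(\x)=\vf'_t(\vh'_t(\x))$; relabeling the learned optimum $(\vf',\vh')$ as $(\vf,\vh)$ yields the claim. Here I would also recall the remark after Theorem~\ref{th:id_os} that $\vk=\bm0$ degenerates the conditional prior $p_{\blambda}(\z|\x,t)$ to a point mass at $\vh_t(\x)$, so the estimator $\E_{p_{\blambda}(\z|\x,t)}\E_{p_{\vf}}(\y|\z,t)$ collapses to $\vf_t(\vh_t(\x))$.

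It is worth noting that the conclusion is independent of \emph{which} global optimizer the training returns: the proof of Theorem~\ref{th:id_os} shows every mean-matching parameter lies in $\{(\vf,\vh):\vf_t=\vj_t\circ\Delta^{-1},\ \vh=\Delta\circ\OS,\ \Delta\text{ injective}\}$, and on this entire set the composition satisfies $\vf_t\circ\vh=\vj_t\circ\OS=\mu_t$. So the statement holds for any learned optimum without an additional uniqueness argument.

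The step I expect to require the most care is not a calculation but a matching of hypothesis inventories between Proposition~\ref{consistency} and Theorem~\ref{th:id_os}. Specifically, the realizability clause of Proposition~\ref{consistency} must be compatible with the \emph{restricted} family of Theorem~\ref{th:id_os}ii): there should exist $(\bar\vtheta,\bar\bphi)$ \emph{within} that restricted family reproducing the full conditional law $p^*(\rvy|\rvx,\rt)$, not merely its mean. The proof of Theorem~\ref{th:id_os} already exhibits a nonempty solution set at the level of means; matching the full law additionally leans on the additive-Gaussian outcome model of \eqref{more_param} together with the corresponding structure of the true DGP, and I would fold this into the standing ``consistency of Intact-VAE'' assumption rather than re-derive it. Everything else is an immediate composition of the two cited results.
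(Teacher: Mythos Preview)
Your proposal is correct and matches the paper's approach: the paper states the corollary immediately after Proposition~\ref{consistency} with the remark that ``consistent prior estimation of CATE follows directly from TE identification,'' and gives no further proof. Your chaining of Proposition~\ref{consistency} (to obtain $p_{\vtheta'}(\rvy|\rvx,\rt)=p^*(\rvy|\rvx,\rt)$ and hence the mean-matching premise) with Theorem~\ref{th:id_os} (to obtain $\mu_t=\vf_t\circ\vh_t$) is exactly the intended argument, spelled out in more detail than the paper provides; your observations about non-uniqueness of the optimizer and the realizability caveat are sound elaborations rather than deviations.
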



\section{Additional backgrounds}



\subsection{Prognostic score and balancing score}
\label{sec:scores}

In the fundamental work of \citep{hansen2008prognostic}, prognostic score is defined equivalently to our P0-score, but it in addition requires no effect modification to work for $\y(1)$. Thus, a useful prognostic score corresponds to our Pt-score. 
Note particularly, Lemma \ref{prop:pscore} implies $\rvy(t)\independent \rt|\PS_t(\rvv)$ (using decomposition rule). Thus, if $\rvv$ gives weak ignorability (exchangeability plus positivity) and $\PS(\rvv)$ is a P-score, then $\PS$ also gives weak ignorability, which is a nice property shared with balancing score, as we will see immediately.

Prognostic scores are closely related to the important concept of balancing score \citep{rosenbaum1983central}. 
\begin{definition}[Balancing score]
\label{bscore}
$\bm\beta(\rvv)$, a function of random variable $\rvv$, is a balancing score if $\rt \independent \rvv|\bm\beta(\rvv)$.
\end{definition}
\begin{proposition}
Let $\bm\beta(\rvv)$ be a function of random variable $\rvv$. $\bm\beta(\rvv)$ is a balancing score if and only if $f(\bm\beta(\rvv))=p(\rt=1|\rvv)\coloneqq e(\rvv)$ for some function $f$ (or more formally, $e(\rvv)$ is $\bm\beta(\rvv)$-measurable). Assume further that $\rvv$ gives weak ignorability, then so does $\bm\beta(\rvv)$.
\end{proposition}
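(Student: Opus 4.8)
The plan is to prove the two halves of the proposition separately, since the iff is purely a measure-theoretic/functional identity and the ``assume further'' clause is a short consequence of the first half combined with the definition of balancing score.

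\emph{The iff characterization.} For the ``only if'' direction, suppose $\bm\beta(\rvv)$ is a balancing score, i.e.\ $\rt \independent \rvv \mid \bm\beta(\rvv)$. Then $p(\rt=1\mid\rvv,\bm\beta(\rvv)) = p(\rt=1\mid\bm\beta(\rvv))$; but $\bm\beta(\rvv)$ is a function of $\rvv$, so the left side is just $e(\rvv) = p(\rt=1\mid\rvv)$. Hence $e(\rvv) = p(\rt=1\mid\bm\beta(\rvv))$, which exhibits $e(\rvv)$ as a function $f$ of $\bm\beta(\rvv)$ (equivalently, $e(\rvv)$ is $\bm\beta(\rvv)$-measurable). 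For the ``if'' direction, suppose $e(\rvv) = f(\bm\beta(\rvv))$ for some $f$. Since $\rt$ is binary, its conditional law given $\rvv$ is entirely determined by $e(\rvv)$; and since $e(\rvv)$ depends on $\rvv$ only through $\bm\beta(\rvv)$, we have $p(\rt=1\mid\rvv) = f(\bm\beta(\rvv)) = \E\!\left(e(\rvv)\mid\bm\beta(\rvv)\right)$ (the tower property collapses because $e(\rvv)$ is already $\bm\beta(\rvv)$-measurable), hence $p(\rt=1\mid\rvv) = p(\rt=1\mid\bm\beta(\rvv))$. As $\rt$ is Bernoulli, equality of these conditional probabilities is equivalent to $\rt \independent \rvv \mid \bm\beta(\rvv)$, so $\bm\beta(\rvv)$ is a balancing score. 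This reproduces Theorem 1 of \cite{rosenbaum1983central}.

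\emph{The weak-ignorability clause.} Assume $\rvv$ gives weak ignorability: $\rvy(t)\independent\rt\mid\rvv$ and $0<p(\rt=1\mid\rvv)<1$. First, positivity transfers: $p(\rt=1\mid\bm\beta(\rvv)) = \E\!\left(e(\rvv)\mid\bm\beta(\rvv)\right)$ is an average of values strictly inside $(0,1)$, hence itself lies strictly in $(0,1)$. For exchangeability through $\bm\beta(\rvv)$, the cleanest route is the same contraction argument used in the proof of Lemma \ref{prop:pscore}: since $\bm\beta(\rvv)$ is a function of $\rvv$, exchangeability $\rvy(t)\independent\rt\mid\rvv$ refines to $\rvy(t)\independent\rt\mid\rvv,\bm\beta(\rvv)$; meanwhile the balancing property gives $\rt\independent\rvv\mid\bm\beta(\rvv)$, i.e.\ (swapping roles) $\rt$ is conditionally independent of $\rvv$ given $\bm\beta(\rvv)$. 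One then applies the conditional-independence calculus (Proposition \ref{indep_prop}): from $\rvy(t)\independent\rt\mid\bm\beta(\rvv),\rvv$ and $\rt\independent\rvv\mid\bm\beta(\rvv)$, contraction yields $\rt\independent\rvv,\rvy(t)\mid\bm\beta(\rvv)$, and decomposition then gives $\rt\independent\rvy(t)\mid\bm\beta(\rvv)$, i.e.\ exchangeability holds given $\bm\beta(\rvv)$. Together with the transferred positivity, $\bm\beta(\rvv)$ gives weak ignorability.

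\emph{Anticipated obstacle.} None of the steps is computationally heavy; the only point requiring care is the measure-theoretic reading of ``$e(\rvv)$ is $\bm\beta(\rvv)$-measurable'' versus ``$e(\rvv)=f(\bm\beta(\rvv))$ for some function $f$'' — these coincide by the Doob–Dynkin lemma, but one should state which version is being used so the tower-property collapse in the ``if'' direction is clean. The contraction/decomposition bookkeeping in the last part must also be done with the independence symbols in the right order, exactly as in the proof of Lemma \ref{prop:pscore}; that is the part most prone to a slip, though it is routine once the pattern from that lemma is reused.
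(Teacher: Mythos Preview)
Your proof is correct. The paper, however, does not supply its own proof of this proposition: it is stated in the ``Additional backgrounds'' appendix (Sec.~\ref{sec:scores}) as a classical fact, essentially Theorems~1 and~3 of \cite{rosenbaum1983central}, and the surrounding text only illustrates the statement (propensity score as $f=\mathrm{id}$, invariance under invertible maps) rather than proving it. So there is nothing to compare against beyond noting that your argument is the standard one the paper implicitly defers to.

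One remark on content: for the weak-ignorability clause you reuse the contraction/decomposition pattern from the paper's proof of Lemma~\ref{prop:pscore}, which is exactly the right parallel (balancing score is to $\rt$ what prognostic score is to $\rvy(t)$), and your bookkeeping of the independence symbols is in the correct order. The positivity step is also fine; the only refinement worth making explicit is that ``an average of values strictly in $(0,1)$ lies strictly in $(0,1)$'' holds almost surely because conditional expectation preserves a.s.\ strict inequalities, not merely because of a naive averaging intuition.
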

Obviously, the \textit{propensity score} $e(\rvv):=p(\rvt=1|\rvv)$, the propensity of assigning the treatment given $\rvv$, is a balancing score (with $f$ be the identity function). Also, given any invertible function $\vv$, the composition $\vv \circ \bm\beta$ is also a balancing score since $f\circ \vv^{-1}(\vv \circ \bm\beta(\rvv))=f(\bm\beta(\rvv))=e(\rvv)$.

Compare the definition of balancing score and prognostic score, we can say balancing score is sufficient for the treatment $\rt$ ($\rt \independent \rvv|\bm\beta(\rvv)$), while prognostic score (Pt-score) is sufficient for the potential outcomes $\y(t)$ ($\y(t) \independent \rvv|\PS_t(\rvv)$). They complement each other; conditioning on either deconfounds the potential outcomes from treatment, with the former focuses on the treatment side, the latter on the outcomes side.

\subsection{VAE, Conditional VAE, and iVAE}
\label{vaes}

VAEs \citep{kingma2019introduction} are a class of latent variable models with latent variable $\rvz$, and observable $\rvy$ is generated by the decoder $p_\vtheta(\rvy|\rvz)$.
In the standard formulation \citep{DBLP:journals/corr/KingmaW13}, the variational lower bound $\mathcal{L}(\rvy;\vtheta,\bm\phi)$ of the
log-likelihood is derived as:
\begin{equation}
\label{elbo_vae}
\begin{split}
      \log p(\rvy) &\geq \log p(\rvy) - \KL(q(\rvz|\rvy)\Vert p(\rvz|\rvy)) \\ 
      &= \E_{\vz \sim q}\log p_\vtheta(\rvy|\vz) - \KL(q_{\bm\phi}(\rvz|\rvy)\Vert p(\rvz)),
\end{split}
\end{equation}
where $\KL$ denotes KL divergence and the encoder $q_{\bm\phi}(\rvz|\rvy)$ is introduced to approximate the true posterior $p(\rvz|\rvy)$. The decoder $p_\vtheta$ and encoder $q_{\bm\phi}$ are usually parametrized by NNs. We will omit the parameters $\vtheta,{\bm\phi}$ in notations when appropriate.

The parameters of the VAE can be learned with stochastic gradient variational Bayes. 
With Gaussian latent variables, the KL term of $\mathcal{L}$ has closed form, while the first term can be evaluated by drawing samples from the approximate posterior $q_{\bm\phi}$ using the reparameterization trick \citep{DBLP:journals/corr/KingmaW13}, then, optimizing the evidence lower bound (ELBO) $\E_{\rvy \sim \mathcal{D}}(\mathcal{L}(\vy))$ with data $\mathcal{D}$, we train the VAE efficiently. 

Conditional VAE (CVAE) \citep{sohn2015learning,kingma2014semi} adds a conditioning variable $\rvc$, usually a class label, to standard VAE (See Figure \ref{f:vae}).
With the conditioning variable, CVAE can give better reconstruction of each class. The variational lower bound is
\begin{equation}
    \log p(\rvy|\rvc) \geq \E_{\vz \sim q}\log p(\rvy|\vz,\rvc) - \KL(q(\rvz|\rvy,\rvc)\Vert p(\rvz|\rvc)). 
\end{equation}
The conditioning on $\rvc$ in the prior is usually omitted \citep{doersch2016tutorial}, i.e., the prior becomes $\rvz \sim \mathcal{N}(\bm0, \mI)$ as in standard VAE, since the dependence between $\rvc$ and the latent representation is also modeled in the encoder $q$. Moreover, unconditional prior in fact gives better reconstruction because it encourages learning representation independent of class, similarly to the idea of beta-VAE \citep{higgins2016beta}.

As mentioned, \textit{identifiable} VAE (iVAE) \citep{khemakhem2020variational} provides the first identifiability result for VAE, using auxiliary variable $\x$. It assumes $\rvy \independent \x|\rvz$, that is, $p(\rvy|\rvz,\x)=p(\rvy|\rvz)$. The variational lower bound is
\begin{equation}
\begin{split}
    \log p(\rvy|\x) &\geq \log p(\rvy|\x) - \KL(q(\rvz|\rvy,\x)\Vert p(\rvz|\rvy,\x)) \\
    &=\E_{\vz \sim q}\log p_{\vf}(\rvy|\vz) - \KL(q(\rvz|\rvy,\x)\Vert p_{\bm T,\bm\lambda}(\rvz|\x)),
\end{split}
\end{equation}
where $\rvy=\vf(\rvz)+\bm\epsilon$, $\bm\epsilon$ is additive noise, and $\rvz$ has exponential family distribution with sufficient statistics $\bm T$ and parameter $\bm \lambda(\x)$. Note that, unlike CVAE, the decoder does \textit{not} depend on $\x$ due to the independence assumption.

Here, \textit{identifiability of the model} means that the functional \textit{parameters} $(\vf,\bm T,\bm\lambda)$ can be identified (learned) up to certain simple transformation. Further, in the limit of $\bm\epsilon \to \bm0$, iVAE solves the nonlinear ICA problem of recovering  $\rvz=\vf^{-1}(\rvy)$.

\subsection{CEVAE: Comparisons and Criticisms}
\label{sec:cevae}

There are few theoretical justifications for CEVAE. Their Theorem 1 directly assumes the joint distribution $p\st(\x,\y,\rt,\rconf)$ including hidden confounder $\rconf$ is recovered, then identification is trivial by using the standard adjustment equation \eqref{eq:id}. The theorem is in essence no more than giving an example where \eqref{eq:id} works. 

However, as we mentioned in Introduction and Sec.~\ref{sec:setup}, the challenge is exactly that the confounder is hidden, unobserved. Many years of work was done in causal inference, to derive conditions under which hidden confounder can be (partially) recovered \citep{greenland1980effect, kuroki2014measurement, miao2018identifying}. In particular, \cite{miao2018identifying} gives the most recent identification result for proxy setting, which requires very specific two proxies structure, and other completeness assumptions on distributions. Thus, it is unreasonable to believe that VAE, with simple descendant proxies, can recover the hidden confounder. 

Moreover, the identifiability of VAE itself is a challenging problem. As mentioned in Introduction and Sec.~\ref{vaes}, \cite{khemakhem2020variational} is the first identifiability result for VAE, but it only identifies equivalence class, not a unique representation function. Thus, it is also unconvincing that VAE can learn a unique latent distribution, without certain assumptions. 

As we show in Sec.~5.1, for relatively simple synthetic dataset, CEVAE can not robustly recover the hidden confounder, even only up to transformation, while our method can (though, again, this is not needed for our method). 

Our work aims to fill this gap of justification for VAE methods, see the next subsection for more. Below we give some straightforward difference between our method and CEVAE.

\paragraph{Motivation}
Our method is motivated by the sufficient scores. In particular, our method is motivated by prognostic scores \citep{hansen2008prognostic}, and our model is directly based on equations \eqref{eq:cate_by_bts} which identifies CATE from PtSs. There is no need to recover the hidden confounder in our framework.

CEVAE is motivated by exploiting proxy variables, and its intuition is that the hidden confounder $\rconf$ can be recovered by VAE from proxy variables. 

\paragraph{Applicability}
As a result, proxy variable $\x$ is contained as a special case as shown in our Figure \ref{fig:setting}. 
CEVAE assumes a specific structure among the variables (their Figure 1). In particular, their covariate $\x$, 1) can only contain descendant proxies, 2) cannot affect the outcome directly, and 3), as implicitly assumed in their (2) for decoder, cannot affect the treatment also. That is, their problem setting is just our Figure \ref{fig:setting} with only one possibility $\x=\x_{p_d}$. 

\paragraph{Architecture}
Our model is naturally based on \eqref{eq:cate_by_bts}, particularly the independence properties of PtS. And as a consequence, our VAE architecture is a natural combination of iVAE and CVAE (see Figure \ref{f:vae}). Our ELBO \eqref{elbo} is derived by standard variational lower bound. 

On the other hand, the architecture of CEVAE is more ad hoc and complex. Its decoder follows the graphical model of descendant proxy mentioned above, but adds an ad hoc component to mimic TARnet \citep{shalit2017estimating}: it uses separated NNs for the two potential outcomes. We tried similar idea on IHDP dataset, and, as we show in Sec.~5.2, it has basically no merits for our method, because we have a principled balancing by our prior.

The encoder of CEVAE is more complex. To have post-treatment estimation, $q(\rt|\x)$ and $q(\y|\x,\rt)$ are added into the encoder. As a result, the ELBO of CEVAE has two additional likelihood terms corresponding to the two distributions. However, in our Intact-VAE, post-treatment estimation is given naturally by our standard encoder, thanks to the correspondence between our model and \eqref{eq:cate_by_bts}. 


\section{Discussions and examples of the injective separation assumption}
\label{sec:p_inj}

We focus on univariate outcome on $\R$ which is the most practical case and the intuitions apply to more general types of outcomes. Then, $\vi$, the mapping between $\mu_0$ and $\mu_1$, is monotone, i.e, either increasing or decreasing. The increasing $\vi$ means, if a change of the value of $\x$ increases (decreases)  the outcome in the treatment group, then it is also the case for the controlled group. This is often true because the treatment does \textit{not} change the mechanism how the covariates affect the outcome, under the principle of ``independence of causal mechanisms (ICM)'' \citep{janzing2010causal}. The decreasing $\vi$ corresponds to another common interpretation when ICM does not hold. Now, the treatment does change the way covariates affect $\rvy$, but in a \textit{global} manner: it acts like a ``switch'' on the mechanism: the same change of $\x$ always has \textit{opposite} effects on the two treatment groups. 

We support the above reasoning by real world examples. First we give two examples where $\mu_0$ and $\mu_1$ are both monotone increasing. This, and also that both $\mu_t$ are monotone decreasing, are natural and sufficient conditions for increasing $\vi$, though not necessary. The first example is form Health. \citep{starling2019monotone} mentions that gestational age (length of pregnancy) has a monotone increasing effect on babies' birth weight, regardless of many other covariates. Thus, if we intervene on one of the other binary covariates (say, t = receive healthcare program or not), both $\mu_t$ should be monotone increasing in gestational age. The next example is from economics. \citep{gan2016efficiency} shows that job-matching probability is monotone increasing in market size. Then, we can imagine that, with t = receive training in job finding or not, the monotonicity is not changed. Intuitively, the examples corresponds to two common scenarios: the causal effects are accumulated though time (the first example), or the link between a covariate and the outcome is direct and/or strong (the second example). 

Examples for decreasing $\vi$ are rarer and the following is a bit deliberate. This example is also about babies' birth weight as the outcome. \citep{abrevaya2015estimating} shows that, with t = mother smokes or not and $\x$ = mother's age, the CATE $\tau(\vx)$ is monotone decreasing for $20 < \vx < 26$ (smoking decreases birth weight, and the absolute causal effect is larger for older mother). On the other hand, it is shown that birth weight slightly increases (by about 100g) in the same age range in a surveyed population \citep{wang2020changing}. Thus, it is convince that, smoking changes the the tendency of birth weight w.r.t mother's age from increasing to decreasing, and gives the large decreasing of birth weight (by about 300g) as its causal effect. This could be understood: the negative effects of smoking on mother's heath and in turn on birth weight are accumulated during the many years of smoking.

\section{Old lessons}
Nowhere in the main text refers this section, so you can omit it if not interested. However, if reading, you may gain insight of how we came to our final theoretical formulation.

\subsection{Identifiability of representation (is not enough)}
\label{notenough}

Here we explain that the model identifiability given in Theorem \ref{idmodel} alone is, albeit interesting, not enough for estimation of TEs. 

The importance of model identifiability can be seen clearly in the following corollary. That is, given $\rt=t$, the latent representation can be identified up to an invertible element-wise affine transformation. It can be easily understood by noting that, with the small noise and the injective $\vf$, the decoder degenerates to deterministic function and the latent representation $\rvz=\vf^{-1}(\rvy)$.

\begin{corollary}
\label{idrepre}
In Theorem \ref{idmodel}, let $\bm\sigma_{\rvy,t} = \bm0$, then $\rvz = \mathcal{A}_t(\rvz')$. 
\end{corollary}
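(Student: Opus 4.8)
The plan is to obtain this as an essentially immediate consequence of Theorem~\ref{idmodel}; the only real content is to pin down what ``the latent representation'' means once the decoder noise is switched off. First I would note that setting $\bm\sigma_{\rvy,t}=\bm0$ collapses the additive-noise decoder $p_{\vf}(\vy|\vz,t)=p_{\beps}(\vy-\vf_t(\vz))$ to the deterministic map $\vy=\vf_t(\vz)$. Since $\vf_t$ is injective (assumption i) of Theorem~\ref{idmodel}), the joint law $p_{\vtheta}(\vy,\vz|\vx,t)=p_{\vf}(\vy|\vz,t)p_{\blambda}(\vz|\vx,t)$ is carried by the graph $\{(\vf_t(\vz),\vz)\}$, so for every $\vy$ in the image of $\vf_t$ the latent attached to it is the single point $\vz=\vf_t^{-1}(\vy)$ — the conditional prior $p_{\blambda}$ only governs which values of $\vy$ are realized, not which $\vz$ produced a given $\vy$. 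Applying the same reasoning to the equivalent parameter $\vtheta'=(\vf',\vh',\vk')$, which by hypothesis induces the same observational distribution $p(\vy|\vx,\rt=t)$ and hence the same realized $\vy$'s, gives $\vz'={\vf'_t}\inv(\vy)$.

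Then I would simply substitute these two identities into the parameter relation~\eqref{eq:class} from Theorem~\ref{idmodel}: for any $\vy$ in the image of $\vf_t$,
\[
\vz = \vf_t^{-1}(\vy) = \diag(\va)\,{\vf'_t}\inv(\vy) + \vb = \mathcal{A}_t\big({\vf'_t}\inv(\vy)\big) = \mathcal{A}_t(\vz'),
\]
with $\diag(\va)$ invertible and $\va,\vb$ depending only on $\blambda_t,\blambda'_t$, exactly as in Theorem~\ref{idmodel}. This is the asserted relation $\rvz=\mathcal{A}_t(\rvz')$.

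The only step needing care — and the closest thing to an obstacle — is the first one: justifying that the noiseless latent is well defined and equals $\vf_t^{-1}(\vy)$. I would handle this either by taking $\bm\sigma_{\rvy,t}\to\bm0$ in the posterior $p_{\vtheta}(\vz|\vx,\vy,t)$, which concentrates on $\vf_t^{-1}(\vy)$ because $\vf_t$ is injective and differentiable so its image is a smooth $n$-manifold on which the Gaussian reconstruction term acts as an approximate identity, or, more directly, by just declaring the noiseless model to have a deterministic decoder and reading $\vz=\vf_t^{-1}(\vy)$ off of $\vy=\vf_t(\vz)$. No estimates beyond those already established in Theorem~\ref{idmodel} are required; in particular assumption~\ref{ass:inv_jac} enters only through the conclusion \eqref{eq:class} of that theorem.
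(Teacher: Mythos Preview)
Your proposal is correct and follows essentially the same route as the paper: observe that $\bm\sigma_{\rvy,t}=\bm0$ makes the decoder a delta at $\vf_t(\vz)$, use injectivity of $\vf_t$ to write $\vz=\vf_t^{-1}(\vy)$ and $\vz'={\vf'_t}^{-1}(\vy)$, and then read off $\vz=\mathcal{A}_t(\vz')$ from \eqref{eq:class}. The paper additionally flags, as a technical remark, that the relation is asserted only on the common support of $\rvy,\rvy'$; your observation that equal observational distributions force the same realized $\vy$'s already addresses this point.
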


The good news is that, all the possible latent representations in our model are equivalent if we consider their independence relationships with any random variables, because any two of them are related by an \textit{invertible} mapping. However, the bad news is that, this holds only given $\rt=t$, while the definition of B/P-score involves both $t$. 

Consider how the \textit{recovered} $\rvz'$ would be used. For a control group ($t=0$) data point $(\vx, y, 0)$, the real challenge under finite sample is to predict the counterfactual outcome $y(1)$. Taking the observation, the encoder will output a posterior sample point $\vz'_0=\vf_0'^{-1}(y)=\mathcal{A}_0^{-1}(\vz_0)$ (with zero outcome noise, the encoder degenerates to a delta function: $q(\rvz|\vx, y, 0)=\delta(\rvz-\vf_0'^{-1}(y))$). Then, we should do \textit{counterfactual inference}, using decoder with counterfactual assignment $t=1$: $y_1'=\vf'_1(\vz'_0)=\vf_1\circ\mathcal{A}_1(\mathcal{A}_0^{-1}(\vz_0))$. This prediction can be arbitrary far from the truth $y(1)=\vf_1(\vz_0)$, due to the difference between $\mathcal{A}_1$ and $\mathcal{A}_0$. More concretely, this is because when learning the decoder, only the posterior sample of the treatment group ($t=1$) is fed to $\vf'_1$, and the posterior sample is different to the true value by the affine transformation $\mathcal{A}_1$, while it is $\mathcal{A}_0$ for $\vz_0'$.

Now we know what we need: $\mathcal{A}_0=\mathcal{A}_1$ so that the equivalence of independence holds unconditionally; and, there exists at least one representation that is indeed a B-score. Then, \textit{any} representation in our model will be a B-score. These indeed are what we have in \cite{wu2021beta}.

\begin{proof}[Proof of Corollary 1]
In this proof, all equations and variables should condition on $t$, and we omit the conditioning in notation for convenience. 

When $\bm\sigma_{\rvy} = \bm0$, the decoder degenerates to a delta function: $p(\rvy|\rvz)=\delta(\rvy-\vf(\rvz))$, we have $\rvy=\vf(\rvz)$ and $\rvy'=\vf'(\rvz')$. 
For any $\vy$ in the common support of $\rvy,\rvy'$, there exist a \textit{unique} $\vz$ and a \textit{unique} $\vz'$ satisfy $\vy=\vf(\vz)=\vf'(\vz')$ (use injectivity). Substitute $\vy=\vf(\vz)$ into the l.h.s of \eqref{eq:class}, and $\vy=\vf'(\vz')$ into the r.h.s, 
so we get $\rvz=\mathcal{A}(\rvz')$. The result for $\vf$ follows. 
\end{proof}
A technical detail is that, $\vz, \vz'$ might not always be related by $\mathcal{A}$, because we used the \textit{common} support of $\rvy,\rvy'$ in the proof. Thus, the relation holds for partial supports of $\rvz, \rvz'$ correspond to the common support of $\rvy,\rvy'$. This problem disappears if we have the a consistent learning method (see Proposition \ref{consistency}).

\subsection{Balancing covariate and its two special cases}
Here we demonstrate part of our old, limited, theoretical formulation, and extract some insights from it.

The following definition was used in the old theory.
The importance of this definition is immediate from the definition of balancing score, that is, if a balancing \textit{covariate} is also a function of $\rvv$, then it is a balancing \textit{score}.

\begin{definition}[Balancing covariate]
Random variable $\rvx$ is a {\em balancing covariate} of random variable $\rvv$ if $\rt \independent \rvv|\rvx$. We also simply say $\rvx$ is \textit{balancing} (or \textit{non}-balancing if it does not satisfy this definition).
\end{definition}

Given that a balancing score of the true (hidden or not) confounder is sufficient for weak ignorability, a natural and interesting question is that, does a balancing covariate of the true confounder also satisfies weak ignorability? The answer is \textit{no}. To see why,
we give the next Proposition indicating that a balancing covariate of the true confounder might \textit{not} satisfy \textit{exchangeability}. 


\begin{proposition}
Let $\rvx$ be a balancing covariate of $\rvv$. If $\rvv$ satisfies exchangeability \emph{and} $\rvy(t) \independent \rvx|\rvv,\rt$, then so does $\rvx$.
\end{proposition}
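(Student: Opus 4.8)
The plan is to obtain exchangeability of $\rvx$, namely $\rvy(t)\independent\rt\,|\,\rvx$, by pure symbolic manipulation with the contraction, weak union, and decomposition rules of Proposition~\ref{indep_prop} (together with the symmetry of conditional independence), carried out for each fixed $t$; the convention that an unquantified $t$ ranges over $\{0,1\}$ then gives the statement for both values.

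First I would feed exchangeability of $\rvv$, i.e.\ $\rvy(t)\independent\rt\,|\,\rvv$, together with the hypothesis $\rvy(t)\independent\rvx\,|\,\rvv,\rt$, into the contraction rule (with $\rvy(t)$ in the role of ``$\rvx$'', $\rt$ in the role of ``$\rvy$'', $\rvv$ in the role of ``$\rvz$'', and $\rvx$ in the role of ``$\rvw$''), obtaining $\rvy(t)\independent\rt,\rvx\,|\,\rvv$. Applying weak union to this yields $\rvy(t)\independent\rt\,|\,\rvx,\rvv$, equivalently $\rt\independent\rvy(t)\,|\,\rvx,\rvv$ by symmetry.

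Next I would combine the balancing-covariate assumption $\rt\independent\rvv\,|\,\rvx$ with the display just obtained, applying contraction a second time, now with $\rt$ in the role of ``$\rvx$'', $\rvv$ in the role of ``$\rvy$'', $\rvx$ in the role of ``$\rvz$'', and $\rvy(t)$ in the role of ``$\rvw$''; this produces $\rt\independent\rvy(t),\rvv\,|\,\rvx$. A final application of decomposition drops $\rvv$ and leaves $\rt\independent\rvy(t)\,|\,\rvx$, i.e.\ exactly the exchangeability of $\rvx$.

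The only care needed is bookkeeping: each rule in Proposition~\ref{indep_prop} is stated with fixed letters, so I must track which of $\{\rvy(t),\rt,\rvv,\rvx\}$ plays which role at each invocation and use that conditional independence is symmetric in its first two arguments. No step is technically deep; the main ``obstacle'' is merely choosing the right pair of premises for each contraction so that the chain closes on $\rvy(t)\independent\rt\,|\,\rvx$.
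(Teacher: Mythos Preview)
Your proposal is correct and follows essentially the same approach as the paper's proof: the paper also uses contraction on exchangeability plus $\rvy(t)\independent\rvx\,|\,\rvv,\rt$ to get $\rvy(t)\independent\rvx,\rt\,|\,\rvv$, then weak union, then a second contraction with the balancing-covariate condition, and finally decomposition. Your explicit bookkeeping of variable roles and the symmetry step is more detailed than the paper, but the logical chain is identical.
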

The proof will use the properties of conditional independence (Proposition \ref{indep_prop}).
\begin{proof}[Proof]
Let $\rvw\coloneqq \rvy(t)$ for convenience. We first write our assumptions in conditional independence, as \textit{A1}. $ \rt \independent \rvv|\rvx$ (balancing covariate), \textit{A2}. $\rvw \independent \rt| \rvv$ (exchangeability given $\rvv$), and \textit{A3}. $\rvw \independent \rvx|\rvv,\rt$.

Now, from \textit{A2} and \textit{A3}, using contraction, we have $\rvw \independent \rvx,\rt|\rvv$, then using weak union, we have $\rvw \independent \rt|\rvx,\rvv$. From this last independence and \textit{A1}, using contraction, we have $\rt \independent \rvv,\rvw|\rvx$. Then $\rt \independent \rvw|\rvx$ follows by decomposition.
\end{proof}

Given this proposition, we know assumptions
\begin{equation}
\label{old_ass}
    \begin{split}
        \text{i) }& \rvy(t) \independent \rt|\rvv \text{ (exchangeability given $\rvv$), } \\
        \text{ii) }& \rt \independent \rvv|\rvx \text{ ($\x$ is a balancing covariate of $\rvv$), and} \\
        \text{iii) }& \rvy \independent \rvx|\rvv,\rt
    \end{split}
\end{equation}
do not imply exchangeability given $\x$, thus seem to be reasonable. 
Note the independence $\rvy(t) \independent \rvx|\rvv,\rt$ assumed in the above proposition implies, but is not implied by, $\rvy \independent \rvx|\rvv,\rt$.
This is because, in general, $\rvy(0) \independent \rvx|\rvv,\rt=1$ and $\rvy(1) \independent \rvx|\rvv,\rt=0$ do not hold.

The assumptions in \eqref{old_ass} were assumed by our old theory, with $\rvv$ is hidden confounder $\rconf$ plus observed confounder $\x_c$. And also note that, iii) is the independence shared by Bt-score.



We examine two important special cases of balancing covariate, which provide further evidence that balancing covariate does not make the problem trivial.

\begin{definition}[Noiseless proxy]
Random variable $\rvx$ is a noiseless proxy of random variable $\rvv$ if $\rvv$ is a function of $\rvx$ ($\rvv=\bm\omega(\rvx)$). 
\end{definition}
Noiseless proxy is a special case of balancing covariate because if $\rvx=\vx$ is given, we know $\vv=\bm\omega(\vx)$ and $\bm\omega$ is a deterministic function, then $p(\rvv|\rvx=\vx)=p(\rvv|\rvx=\vx,\rt)=\delta(\rvv-\bm\omega(\vx))$.
Also note that, a noiseless proxy always has higher dimensionality than $\rvv$, or at least the same. 

Intuitively, if the value of $\rvx$ is given, there is no further uncertainty about $\vv$, so the observation of $\vx$ may work equally well to adjust for confounding. But, as we will see soon, a noiseless proxy of the true confounder does \textit{not} satisfy positivity. 

\begin{definition} [Injective proxy]
Random variable $\rvx$ is an injective proxy of random variable $\rvv$ if $\rvx$ is an injective function of $\rvv$ ($\rvx=\bm\chi(\rvv)$, $\bm\chi$ is injective). 
\end{definition}
Injective proxy is again a special case of noiseless proxy, since, by injectivity, $\rvv=\bm\chi^{-1}(\rvx)$, i.e. $\rvv$ is also a function of $\rvx$. 

Under this very special case, that is, if $\rvx$ is an injective proxy of the true confounder $\rvv$, we finally have $\rvx$ is a balancing score and satisfies weak ignorability, since $\rvx$ is a balancing covariate and a function of $\rvv$. To see this in another way, let $f=e \circ \bm\chi^{-1}$ and $\bm\beta=\bm\chi$ in Proposition \ref{bscore}, then $f(\rvx)=f(\bm\beta(\rvv))=e(\rvv)$. By weak ignorability of $\rvx$, \eqref{eq:id} has a simpler counterpart $\mu_t(\vx) = \E(\rvy(t)|\rvx=\vx) = \E(\rvy|\rvx=\vx,\rt=t)$. Thus, a naive regression of $\rvy$ on $(\rvx, \rt)$ will give a valid estimator of CATE and ATE.  

However, a noiseless but \textit{non}-injective proxy is \textit{not} a balancing score, in particular, positivity might \textit{not} hold. Here, a naive regression will not do. This is exactly because $\bm\omega$ is non-injective, hence multiple values of $\rvx$ that cause non-overlapped supports of $p(\rt=t|\rvx=\vx),t=0,1$ might be mapped to the same value of $\rvv$. An extreme example would be $\rt=\mathbb{I}(\rx>0),\rz=|\rx|$. We can see $p(\rt=t|\rx)$ are totally non-overlapped, but $\forall t,z \neq 0: p(\rt=t|\rz=z)=1/2$.

So far, so good. In the end, what is the problem of balancing covariate? Here it is. \textit{If} the we have the positivity of $\x$ ($p(\rt|\x)>0$ always), then, using the positivity and balancing to get $p(\conf|\vx)=p(\conf|\vx,\rt=t)$ for all $\vx$, we follow \eqref{eq:id},
\begin{equation}
\begin{split}
    \mu_t(\vx)&=\textstyle \int (\int p(y|\conf,\vx,t)ydy)p(\conf|\vx)d\conf \\
    &=\textstyle \int (\int p(y|\conf,\vx,t)ydy)p(\conf|\vx,\rt=t)d\conf \\
    &=\textstyle \int (\int p(y,\conf|\vx,t)d\conf)ydy=\E(\y|\vx,t).
\end{split}
\end{equation}
Naive estimator just works! Thus, \textit{if} $\x$ indeed was a balancing covariate of true confounder, we gave a better method than naive estimator only in the sense that it works without positivity of $\x$. It seems what our old theory really addressed was lack of positivity, another important issue in causal inference \citep{d2020overlap}, but not confounding.

This limited formulation, together with the great experimental performance of our method, motivated us to develop a much more general theory, that is, the theory based on B*-scores in the main text.

There are several lessons learned from the old formulation. First, there may exist cases that exchangeability given $\x$ fails to hold even when positivity of $\x$ holds, but the naive estimator still works. This is related to the fact that the conditional independence based on which balancing score/covariate are defined is not necessary for identification. And we should be able to find weaker but still sufficient conditions for identification, and Bt-score is an example. Second, balancing covariate assumption in \eqref{old_ass} is strong, though may not make a trivial problem. It basically means $\x$, only one of the observables, is sufficient information for treatment assignment. This inspires us to consider both $\x,\y$ in our theory, as in the Bt-score given by our posterior and encoder.



\section{Details and additional results for experiments}

\subsection{Synthetic data}
\label{sec:exp_syn_app}

We generate data following \eqref{art_model} with $\rz$, $\rvy$ 1-dimensional and $\rvx$ 3-dimensional. $\mu_i$ and $\sigma_i$ are randomly generated in range $(-0.2, 0.2)$ and $(0, 0.2)$, respectively. 

\begin{wrapfigure}{r}{0.4\textwidth}

\vspace{-.2in}
  \begin{center}
    \includegraphics[width=0.4\textwidth]{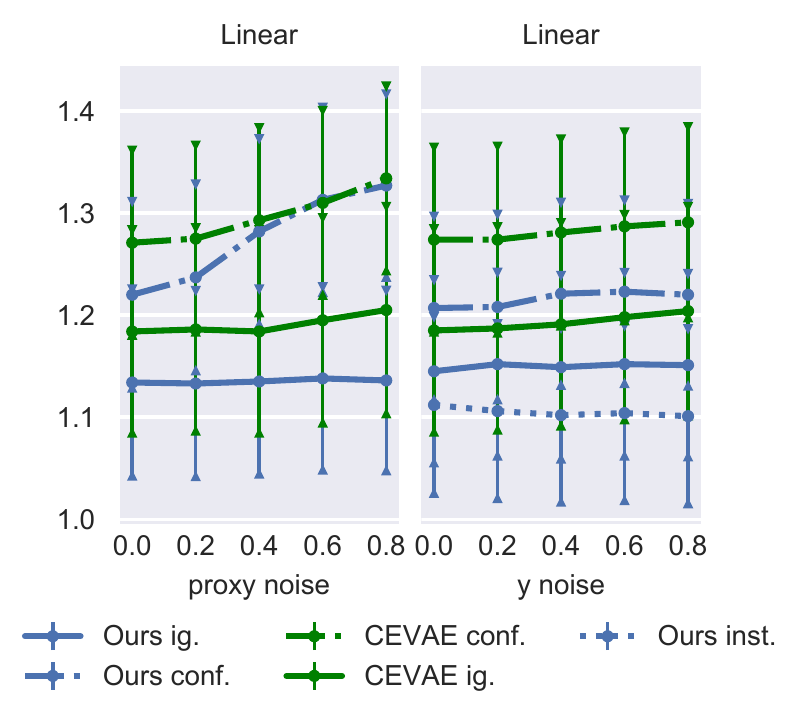}
  \end{center}
  \vspace{-.2in}
  
  \caption{\footnotesize{$\sqrt{\epsilon_{pehe}}$ on linear synthetic dataset. Error bar on 100 random models. We adjust one of $\alpha,\beta$ at a time. Results for ATE and post-treatment are similar.}}
\vspace{-.1in}
\label{lin_art}
\end{wrapfigure}

We adjust the outcome and proxy noise level by $\alpha,\beta$ respectively.
The output of $f$ is normalized by $C\coloneqq\Var_{\{\mathcal{D}|\rt=t\}}(f(\rz))$. This means we need to use $0 \leq \alpha < 1$ to have a reasonable level of noise on $\rvy$ (the scales of mean and variance are comparable). Similar reasoning applies to $\rz|\rvx$; outputs of $h,k$ have approximately the same range of values since the functions' coefficients are generated by the same weight initializer.

As shown in Figure \ref{lin_art}, our method again constantly outperforms CEVAE under linear outcome models. Interestingly, linear outcome models seem harder for both methods\footnote{Note that, after generating the outcomes and before the data is used, we normalize the distribution of ATE of the 100 generating models, so the errors on linear and nonlinear settings are basically comparable.}. While we did not dig into this point because this would be a digress from our purpose, we give two possible reasons: 1)  under similar noise levels, the observed outcome values under nonlinear outcome models might be more informative about the values of $\rz$, because nonlinear models are often steeper than linear models for many values of $\rz$; 2) the two true linear outcome models for $t=0,1$ are more similar, particularly when the two potential outcomes are in small and similar ranges, and it is harder to distinguish and learning the two outcome models.

You can find more plots for latent recovery at the end of the paper.

\subsection{IHDP} 
\label{sec:ihdp_app}

IHDP is based on an RCT where each data point represents a child with 25 features about their birth and mothers. \texttt{Race} is introduced as a confounder by artificially removing all treated children with nonwhite mothers. There are 747 subjects left in the dataset. The outcome is synthesized by taking the covariates (features excluding \texttt{Race}) as input, hence \textit{unconfoundedness} holds given the covariates. 

Following previous work, we split the dataset by 63:27:10 for training, validation, and testing.



\subsection{Additional plots on synthetic datasets}
\label{sec:plots}
See last pages.

\section{Discussions}


Since our method works without the recovery of either hidden confounder or true score distribution, we often cannot see apparent relationships between recovered latent representation and the true hidden confounder/scores. It would be nice to directly see the learned representation preserves causal properties, for example, by some causally-specialized metrics, e.g. \cite{suter2019robustly}. 

Despite the formal requirement in Theorem \ref{idmodel} of fixed distribution of noise on $\rvy$, inherited from \cite{khemakhem2020variational}, the experiments show evidence that our method can learn the outcome noise. We observed that, in most cases, allowing the noise distribution to be learned depending on $\rvz,\rt$ improves performance.  Theoretical analysis of this phenomenon is an interesting direction for future work. 


Also, our causal theory does not in principle require continuous latent distributions, though in Theorem \ref{idmodel}, differentiability of $\vf$ is inherited from iVAE. Given the fact that currently all nonlinear ICA based identifiability requires differentiable mapping between the latent and observables, directly based on it, theoretical extensions to \textit{discrete} latent variable would be challenging. However, what is essential for CATE identification is the \textit{same} transformation between true and recovered score distribution for both $t$, but the transformation needs \textit{not} to be affine, and, possibly, neither injective. This opens directions for future extensions, based not necessarily on nonlinear ICA.



\begin{figure}[h] 
    \vspace*{-0.1in}
    \centering
    \includegraphics[width=.9\textwidth]{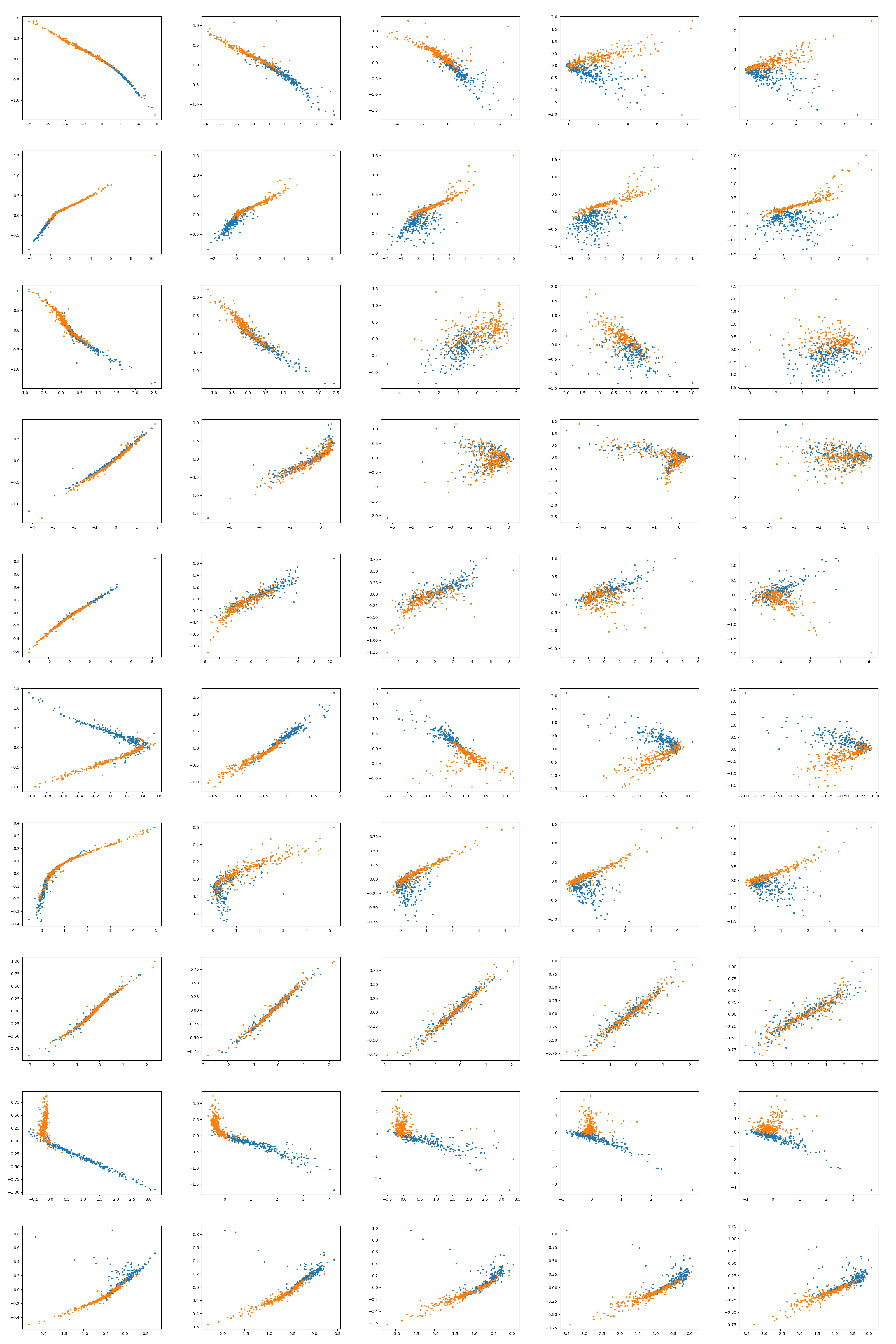}
    \vspace*{-0.1in}
    \caption{Plots of recovered-true latent under \textit{unobserved confounding}. Rows: first 10 nonlinear random models, columns: \textit{proxy} noise level.}
    \vspace*{-0.1in}
\end{figure}


\begin{figure}[h] 
    \vspace*{-0.1in}
    \centering
    \includegraphics[width=.9\textwidth]{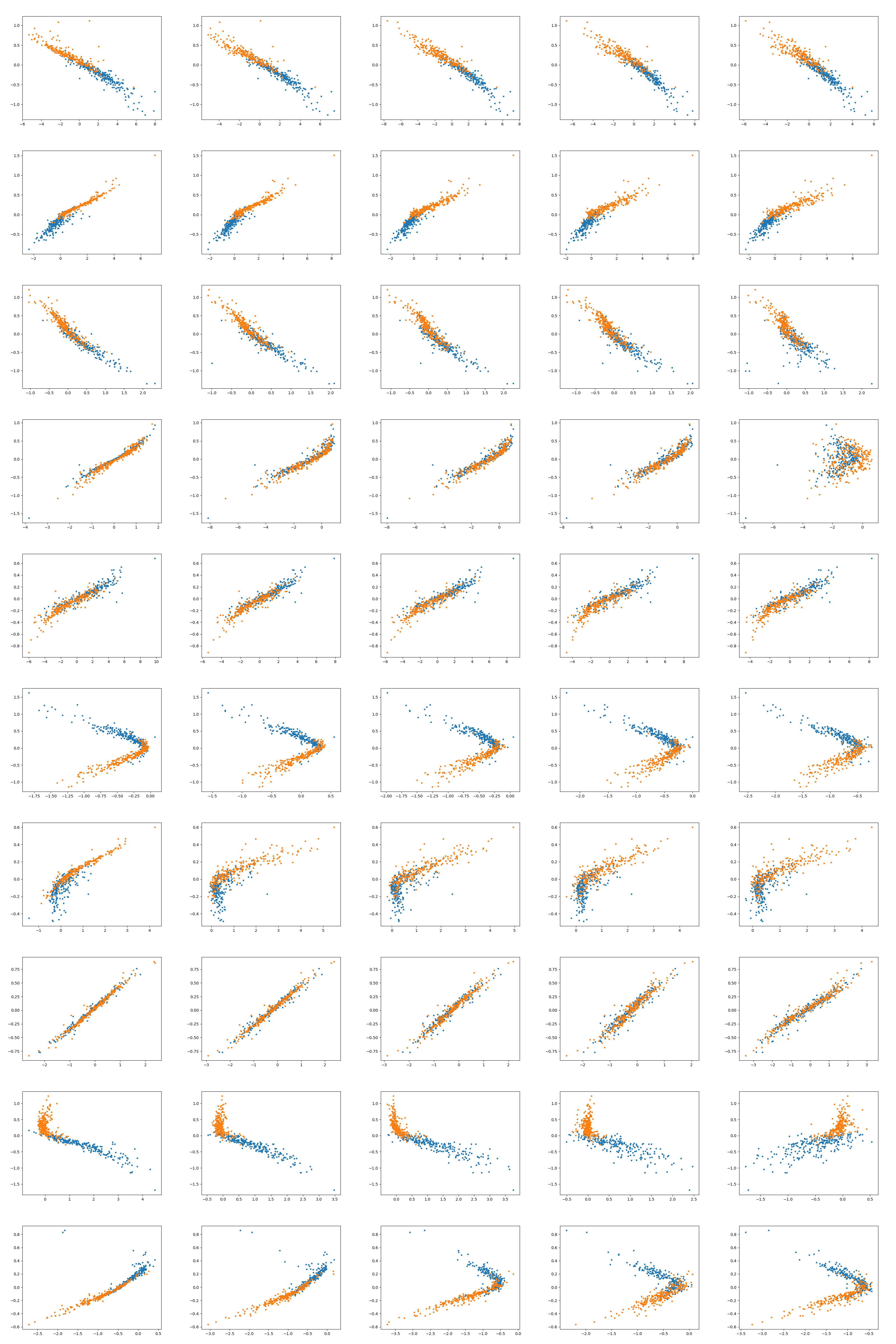}
    \vspace*{-0.1in}
    \caption{Plots of recovered-true latent under \textit{unobserved confounding}. Rows: first 10 nonlinear random models, columns: \textit{outcome} noise level.}
    \vspace*{-0.1in}
\end{figure}

\begin{figure}[h] 
    \vspace*{-0.1in}
    \centering
    \includegraphics[width=.9\textwidth]{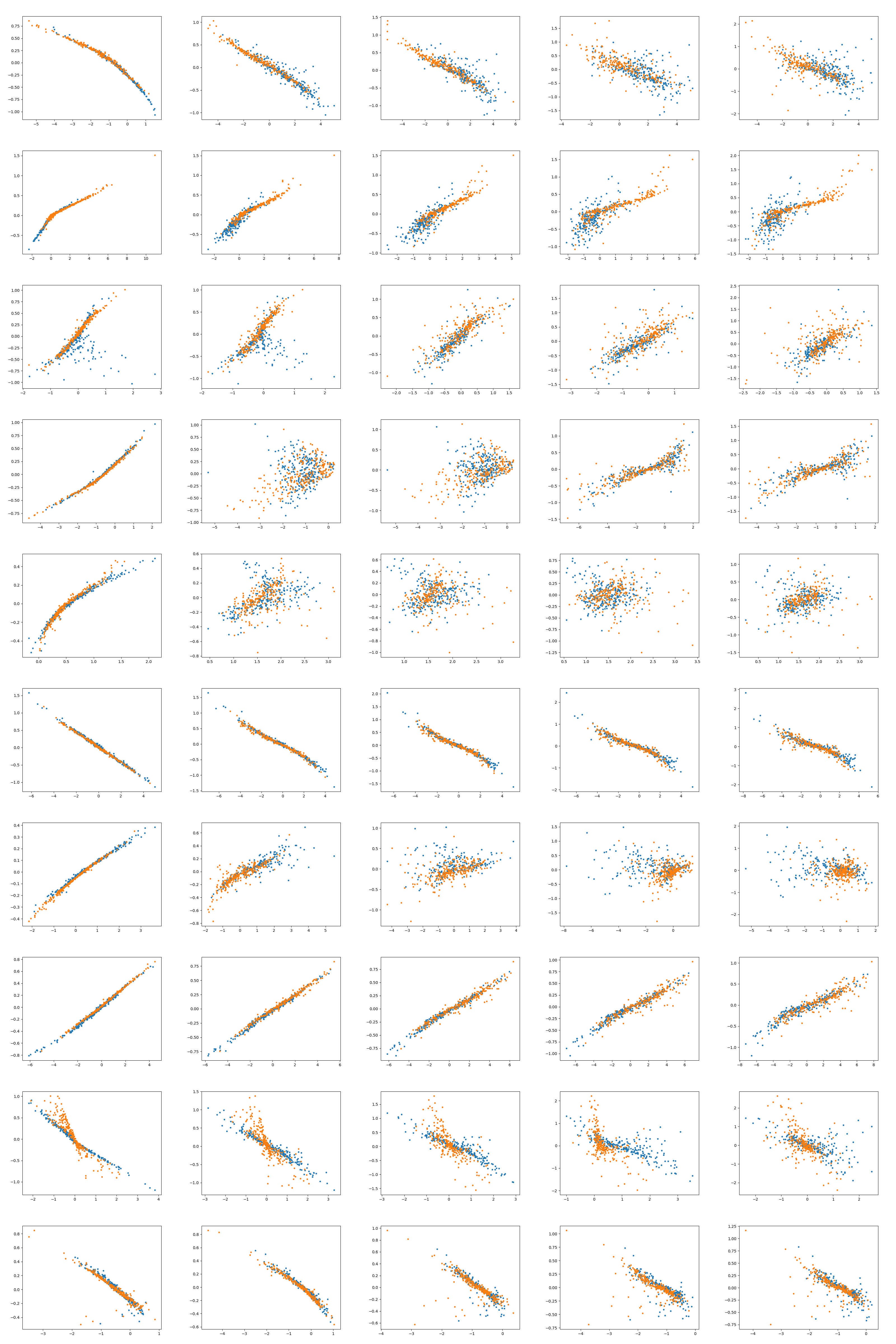}
    \vspace*{-0.1in}
    \caption{Plots of recovered-true latent when \textit{ignorability} holds. Rows: first 10 nonlinear random models, columns: \textit{proxy} noise level.}
    \vspace*{-0.1in}
\end{figure}

\begin{figure}[h] 
    \vspace*{-0.1in}
    \centering
    \includegraphics[width=.9\textwidth]{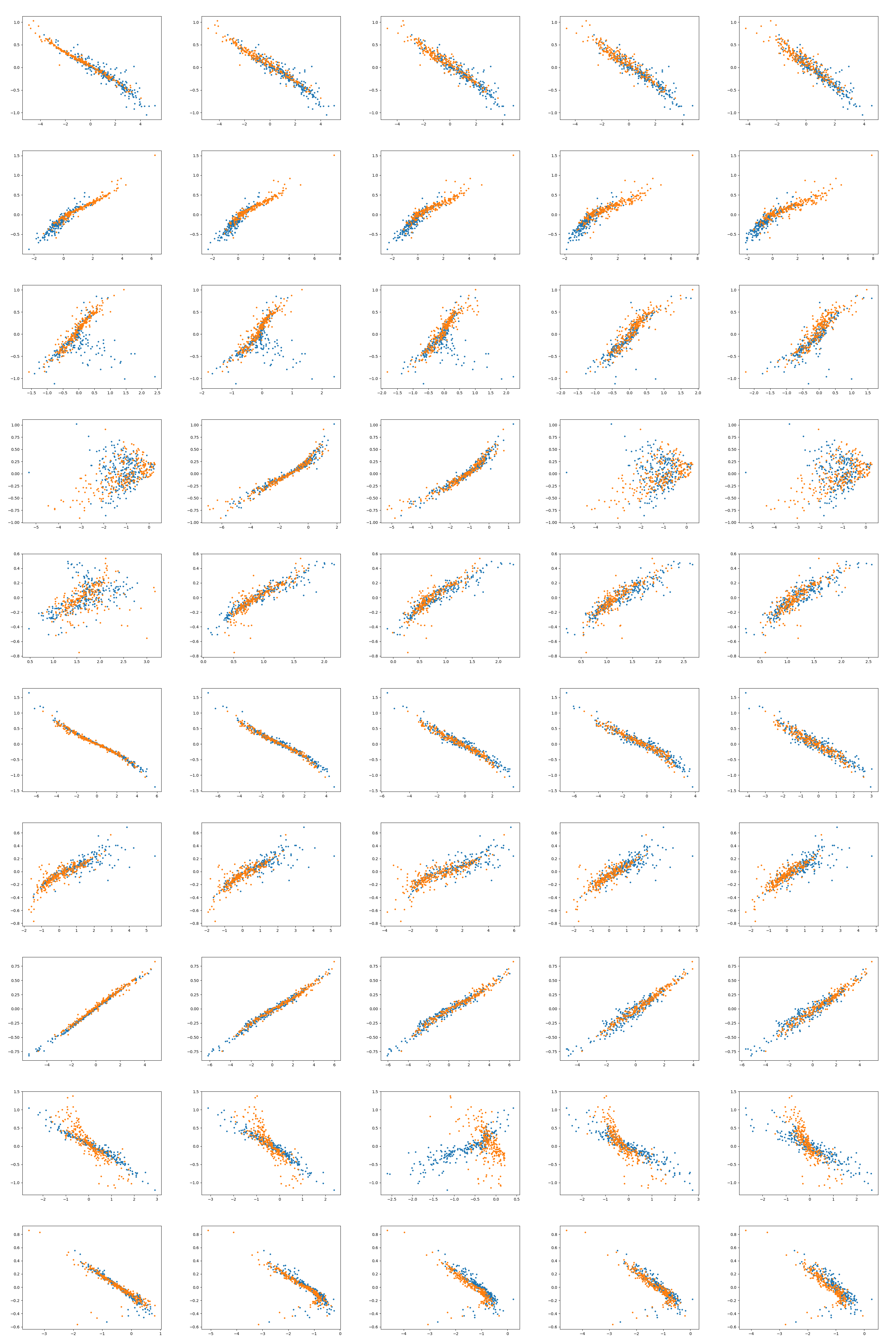}
    \vspace*{-0.1in}
    \caption{Plots of recovered-true latent when \textit{ignorability} holds. Rows: first 10 nonlinear random models, columns: \textit{outcome} noise level.}
    \vspace*{-0.1in}
\end{figure}

\begin{figure}[h] 
    \vspace*{-0.1in}
    \centering
    \includegraphics[width=.9\textwidth]{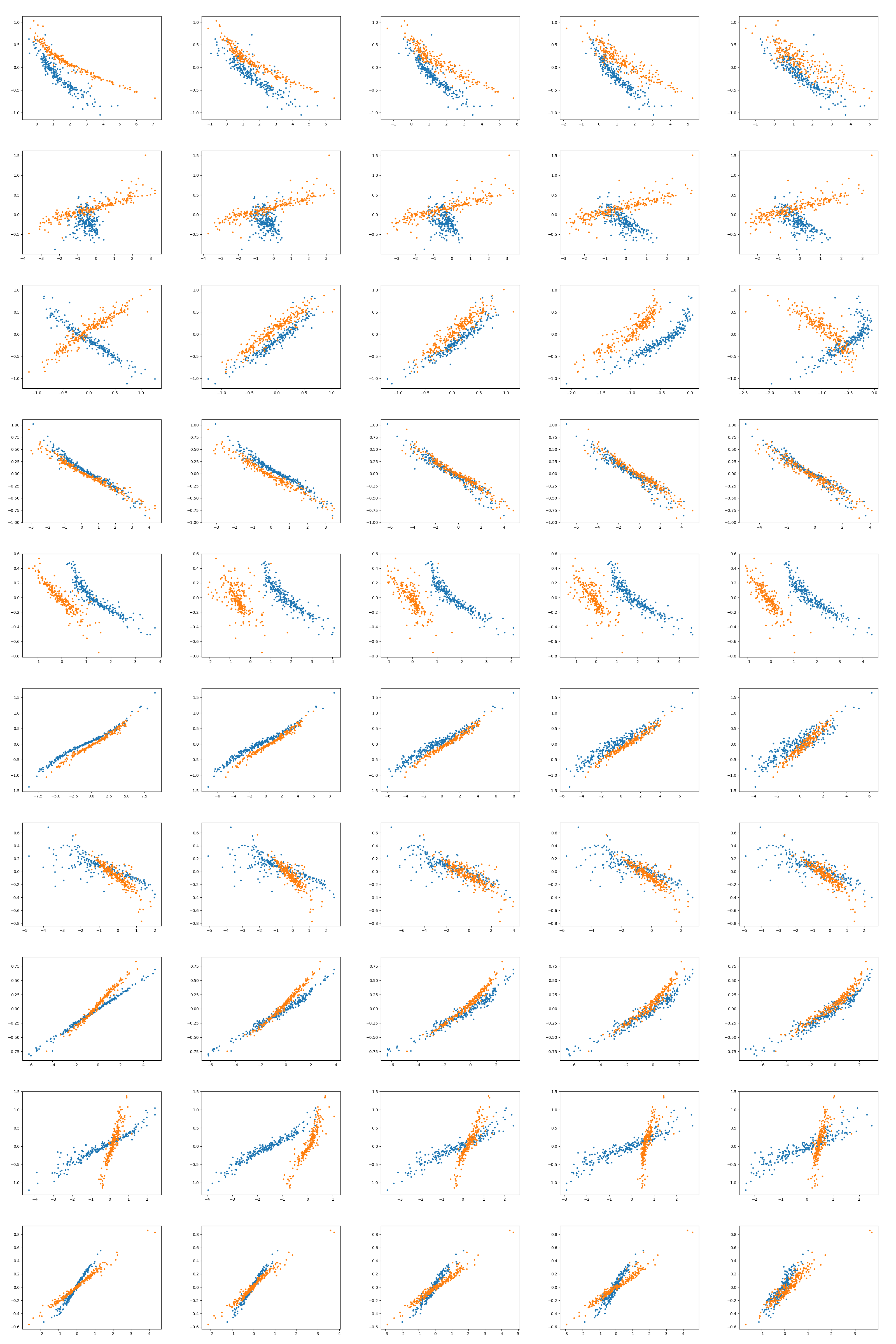}
    \vspace*{-0.1in}
    \caption{Plots of recovered-true latent when \textit{ignorability} holds. Conditional prior \textit{depends} on $t$. Rows: first 10 nonlinear random models, columns: \textit{outcome} noise level. Compare to the previous figure, we can see the transformations for $t=0,1$ are \textit{not} the same.}
    \vspace*{-0.1in}
    \label{fig:bad_recover}
\end{figure}

\begin{figure}[h] 
    \vspace*{-0.1in}
    \centering
    \includegraphics[width=.9\textwidth]{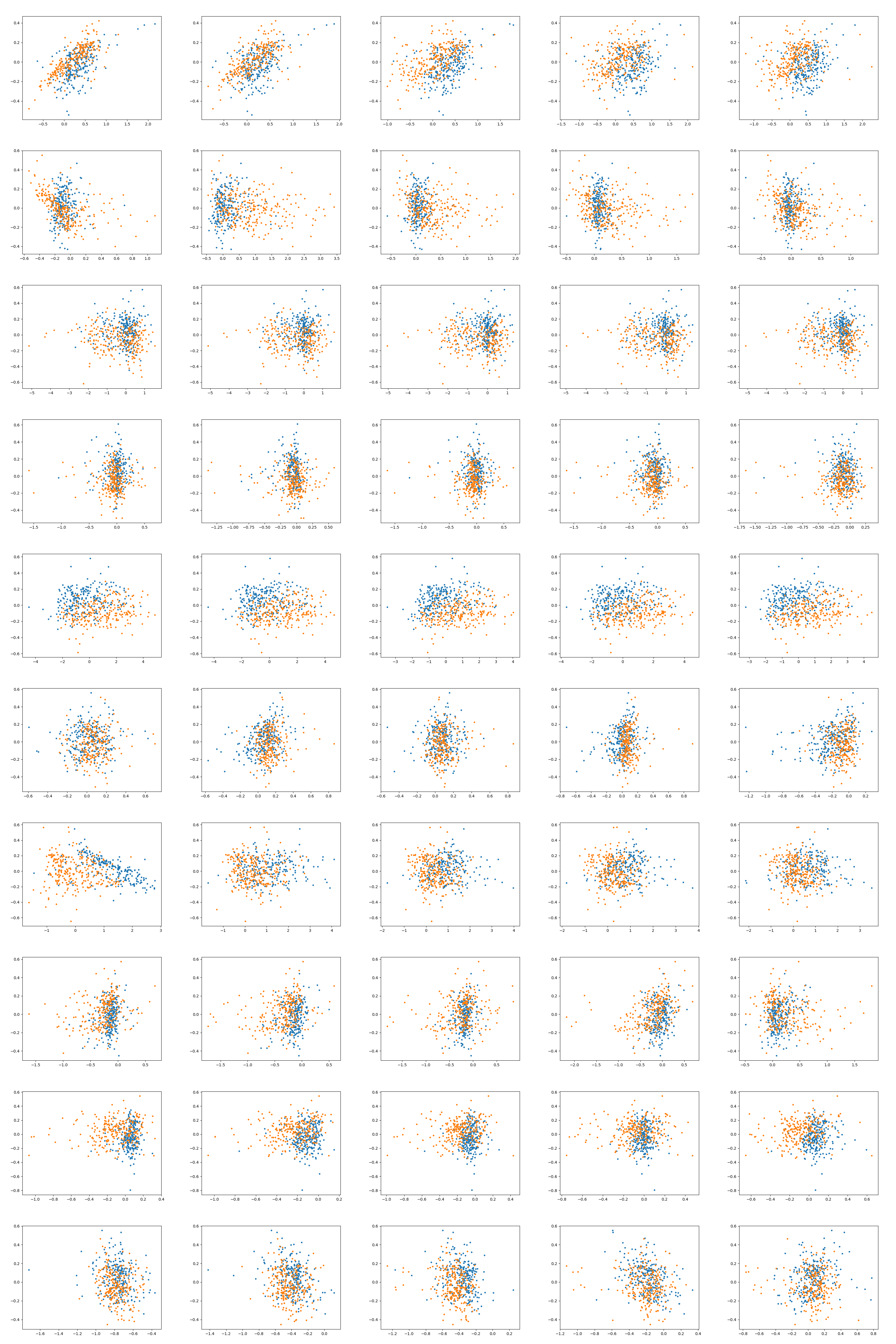}
    \vspace*{-0.1in}
    \caption{Plots of recovered-true latent on \textit{IVs}. Rows: first 10 nonlinear random models, columns: \textit{outcome} noise level.}
    \vspace*{-0.1in}
\end{figure}

\end{document}